\definecolor{shadecolour}{gray}{0.4}
\newcommand{\diag}{\rm{diag}}
\newcommand{\BY}{{\mathbf{Y}}}
\newcommand{\BX}{{\mathbf{X}}}
\newcommand{\BXs}{{\mathbf{Xs}}}
\newcommand{\BD}{{\mathbf{D}}}
\newcommand{\BL}{{\mathbf{L}}}
\newcommand{\Bone}{{\mathbf{1}}}
\newcommand{\BU}{{\mathbf{X}}}
\newcommand{\BB}{{\mathbf{B}}}
\newcommand{\BR}{{\mathbf{X}}}
\newcommand{\BW}{{\mathbf{W}}}
\newcommand{\BZ}{{\mathbf{Z}}}
\newcommand{\BM}{{\mathbf{M}}}
\newcommand{\BT}{{\mathbf{T}}}
\newcommand{\BI}{{\mathbf{I}}}
\newcommand{\T}{{\!\top}}
\newcommand{\Bx}{{\mathbf{x}}}
\newcommand{\By}{{\mathbf{y}}}
\newcommand{\IDH}{{IMH}\xspace}
\newcommand{\IDHs}{{IMHs}\xspace}
\newcommand{\idh}{{Inductive Manifold-Hashing}\xspace}
\renewcommand{\Lambda}{\varLambda}
\newcommand{\st}{{\,\,\mathrm{s.t.\,\,}}}
\newcommand{\trace}{{\mathrm{trace}}}
\newcommand{\w}{{\mathrm{w}}}
\newcommand{\sgn}{{\mathrm{sgn}}}
\newtheorem{theorem}{Theorem}
\newenvironment{theorem*}{\par\noindent{\bf Theorem\ }}{\hfill\\[2mm]}
\newtheorem{lemma}[theorem]{Lemma}
\newtheorem{corollary}[theorem]{Corollary}
\newenvironment{corollary*}{\par\noindent{\bf Corollary\ }}{\hfill\\[2mm]}
\newtheorem{definition}[theorem]{Definition}
\newcommand{\RR}{\mathbb{R}}
\newcommand{\NN}{\mathbb{N}}
\DeclareMathOperator{\Ncal}{\mathcal{N}}
\newcommand{\Bc}{{\mathbf{c}}}
\def\Var{{\rm Var}\,}
\def\E{{\rm E}\,}
\DeclareRobustCommand\onedot{\futurelet\@let@token\@onedot}
\def\@onedot{\ifx\@let@token.\else.\null\fi\xspace}
\def\eg{\emph{e.g}\onedot} 
\def\ie{\emph{i.e}\onedot}
\begin{document}

\title{Hashing on Nonlinear Manifolds}

\author{
         Fumin Shen,
         Chunhua Shen,
         Qinfeng Shi,
         Anton van den Hengel,
         Zhenmin Tang,
         Heng Tao Shen

\thanks
{
    F. Shen  is with School of Computer Science and Engineering, University of Electronic Science and Technology of China, Chengdu 611731,  China
    (e-mail: fumin.shen@gmail.com).
   Part of this work was done
   when the first author was visiting The University of Adelaide.
 }
\thanks
{
    C. Shen, Q. Shi and A. van den Hengel are with  School
    of Computer Science at The University of Adelaide, SA 5005, Australia (e-mail: \{chunhua.shen,
    javen.shi, anton.vandenhengel\}@adelaide.edu.au). C. Shen and A. van den Hengel are also with Australian Centre for Robotic Vision.
      Correspondence should be addressed to C. Shen.
}
 \thanks
 {Z. Tang is with the School of Computer Science and Technology, Nanjing University of Science and Technology, Nanjing 210094, P.R. China (e-mail: tzm.cs@mail.njust.edu.cn).
 }
 \thanks
 { H. T. Shen is with School of Information Technology and Electrical Engineering, The University of Queensland, Australia
  (E-mail: shenht@itee.uq.edu.au).
 }
}

\maketitle

\begin{abstract}

    Learning based hashing methods have attracted considerable
    attention due to their ability to greatly increase the scale at
    which existing algorithms may operate.
    Most of these methods are designed
    to generate binary codes
     preserving the Euclidean similarity in the original space.
    Manifold learning techniques, in contrast,
    are better able to model the intrinsic structure
    embedded in the original high-dimensional data.
    The complexities of these models, and the problems with
    out-of-sample data, have previously rendered them unsuitable for
    application to large-scale embedding, however.

    In this work, how to learn compact binary embeddings
    on their  intrinsic manifolds is considered.
    In order to address the above-mentioned difficulties,
    an efficient, inductive solution to the
    out-of-sample data problem, and a process by which
    non-parametric manifold learning may be used as the basis of a hashing method is proposed.
    The proposed approach thus allows the development of a range of new hashing techniques
    exploiting the flexibility of the wide variety of manifold learning approaches available.
    It is particularly shown that hashing on the basis of t-SNE
    \cite{tSNE2008}
    outperforms state-of-the-art hashing methods on
    large-scale benchmark datasets, and %
    is very effective for image classification with very short code
    lengths.
The proposed hashing framework is shown to be easily improved, for example, by minimizing the quantization error with learned orthogonal rotations. In addition, a supervised inductive manifold hashing framework is developed by incorporating the label information, which is shown to greatly advance the semantic retrieval performance.

\end{abstract}

\begin{IEEEkeywords}
  Hashing, binary code learning, manifold learning, image retrieval.
\end{IEEEkeywords}

\section{Introduction}

One key challenge in many large scale image data based applications is how to index and organize the  data accurately, but also efficiently.
    Various hashing techniques have attracted considerable attention
    in computer vision, information retrieval
    and machine learning \cite{LSH99,PCA-ITQ,ICML13SHEN,CVPR14Lin,SSH2012,SH08},
    and seem to offer great promise towards this goal.
    Hashing methods aim to encode documents or images
    as a set of short binary codes, while maintaining aspects of the structure
    of the original data (\eg, similarities between data points).
    The advantage of these compact binary representations is that
    pairwise comparisons may be carried out extremely efficiently in the Hamming space.
    This means that many algorithms which are based on such pairwise
    comparisons can be made more efficient, and applied to much larger datasets.
Due to the flexibility of hash codes, hashing techniques can be applied in many ways.
    one can, for example, efficiently
    perform similarity search by exploring only those data points
    falling into the close-by buckets to the query by the Hamming
    distance, or use the binary representations for other tasks like
    image classification.

Locality sensitive hashing (LSH) \cite{LSH99}  is one of the most well-known
{\it data-independent} hashing methods, and generates  hash codes based on
random projections.
With the success of LSH, random hash functions have been extended to
 several  similarity measures, including $p$-norm distances
 \cite{LSH_p}, the Mahalanobis metric \cite{kulis2009fast}, and kernel
 similarity \cite{KLSH2009,raginsky2009locality}. However, the methods belonging to the LSH
 family normally require relatively long hash codes and several hash
 tables to achieve both high precision and recall.  This leads to a
 larger storage cost than would otherwise be necessary, and thus limits
 the sale at which the algorithm may be applied.

{\it Data-dependent} or learning-based hashing methods have
been developed with the goal of learning more compact hash codes.
Directly learning binary embeddings typically results in an optimization
problem which is very difficult to solve, however.
Relaxation is often used to simplify the optimization (\eg,
\cite{LDAhash2012,SSH2012}).
As in LSH, the methods aim to identify a set of hyperplanes,
but now these hyperplanes are learned, rather than randomly selected.
For example, PCAH \cite{SSH2012}, SSH \cite{SSH2012}, and iterative quantization (ITQ)
\cite{PCA-ITQ} generate linear hash functions through simple principal component analysis (PCA) projections, while LDAhash \cite{LDAhash2012} is based on Linear Discriminant Analysis (LDA).
Extending this idea, there are also methods which learn hash functions
in a kernel space, such as binary  reconstructive embeddings
(BRE) \cite{BRE2009}, random maximum margin hashing (RMMH)
\cite{RMMH2011} and kernel-based supervised hashing (KSH)
\cite{KSH2012}.  In a  departure from such methods,  however, spectral
hashing (SH) \cite{SH08}, one of the most popular learning-based
methods, generates hash codes by
    solving the relaxed mathematical program that is similar to the
    one in Laplacian eigenmaps \cite{LE2001}.

\setcounter{figure}{1}

\begin{figure*}
\centering
\begin{align*}
\begin{subfigure}{0.08\textwidth}
\centering
\includegraphics[height = 1.3cm]{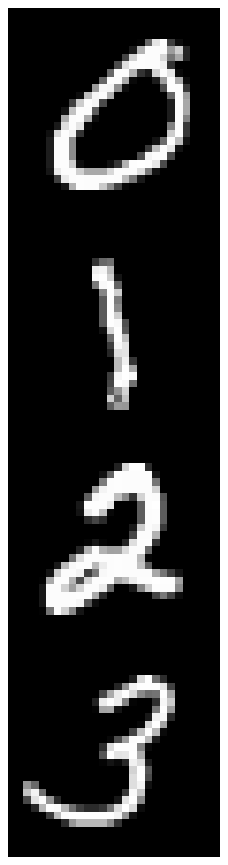}
\caption{Queries}
\end{subfigure} %
\begin{subfigure}{0.21\textwidth}
\includegraphics[height = 1.3cm]{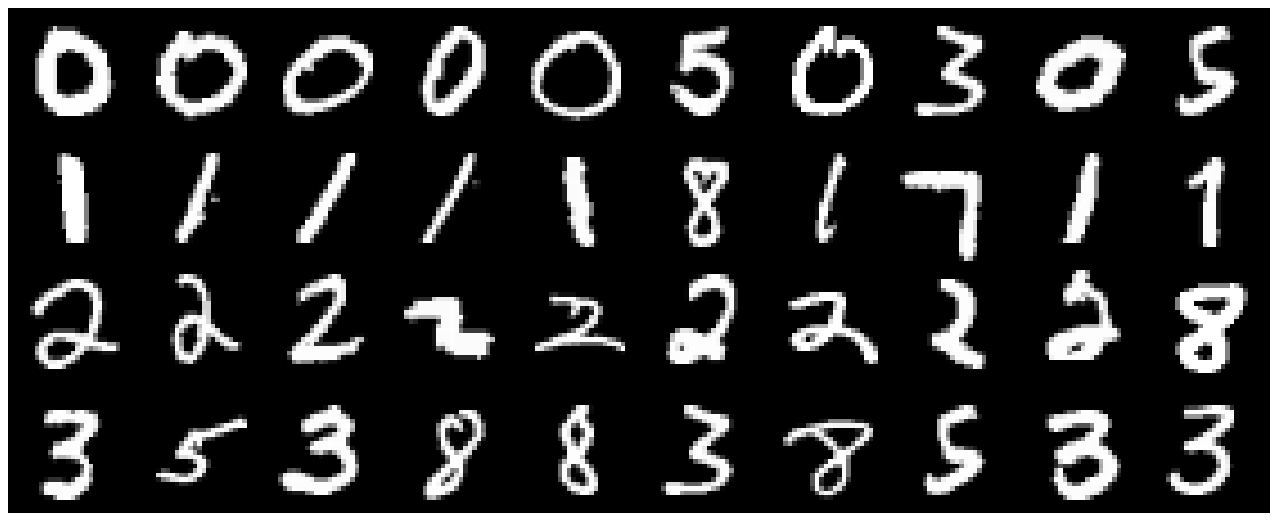}
\caption{$ \ell_2$ dist.\ on 784D}
\end{subfigure} %
\begin{subfigure}{0.21\textwidth}
\includegraphics[height = 1.3cm]{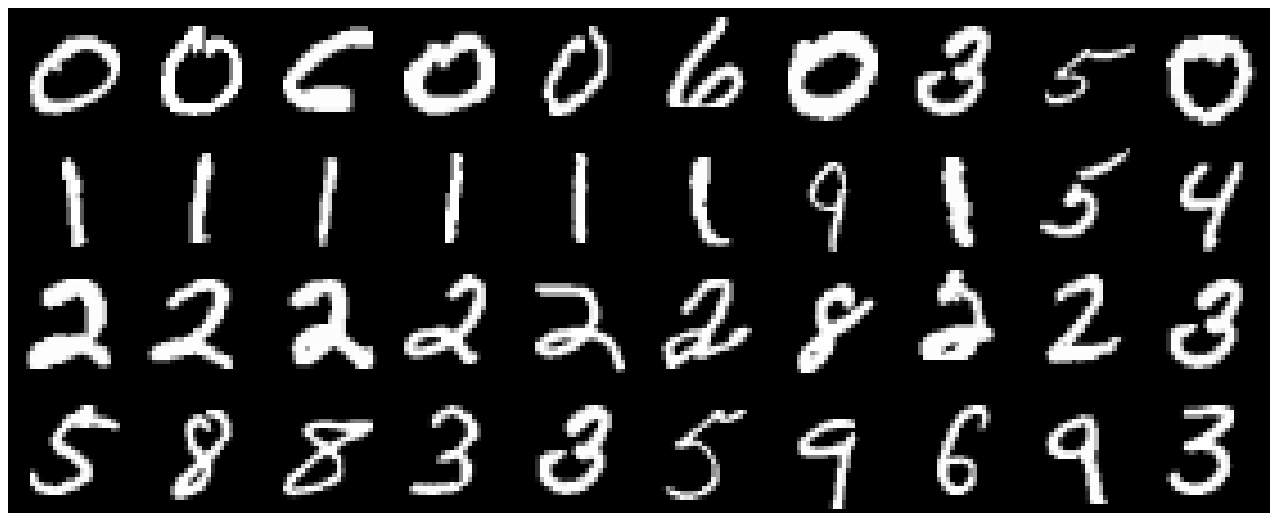}
\caption{LSH with 128-bits}
\end{subfigure}
\begin{subfigure}{0.21\textwidth}
\includegraphics[height = 1.3cm]{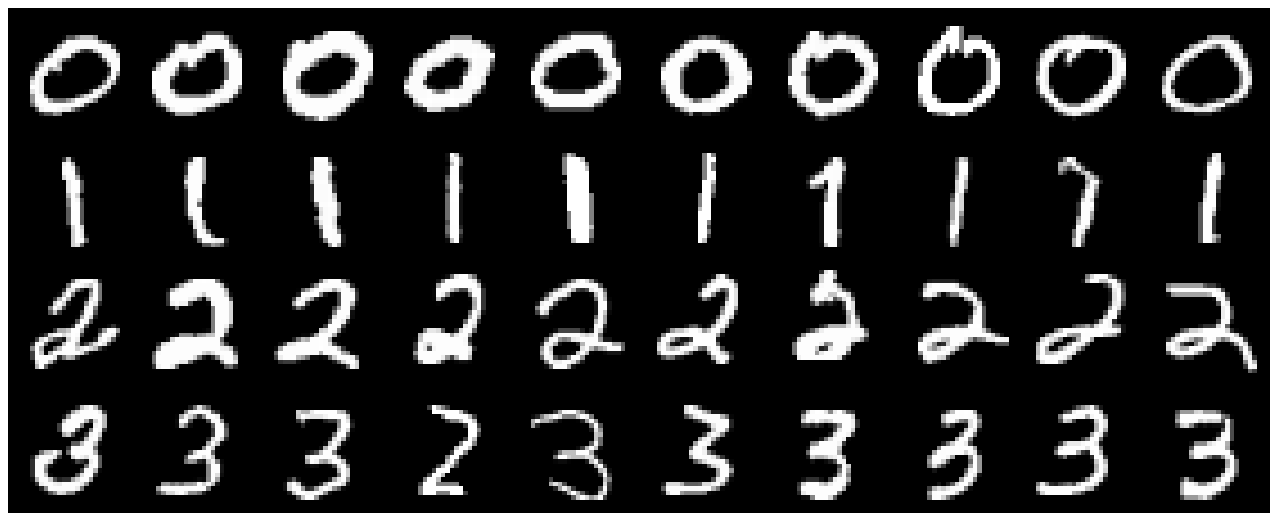}
\caption{$ \ell_2$ dist.\ on embeded 48D}
\end{subfigure} %
\begin{subfigure}{0.21\textwidth}
\includegraphics[height = 1.3cm]{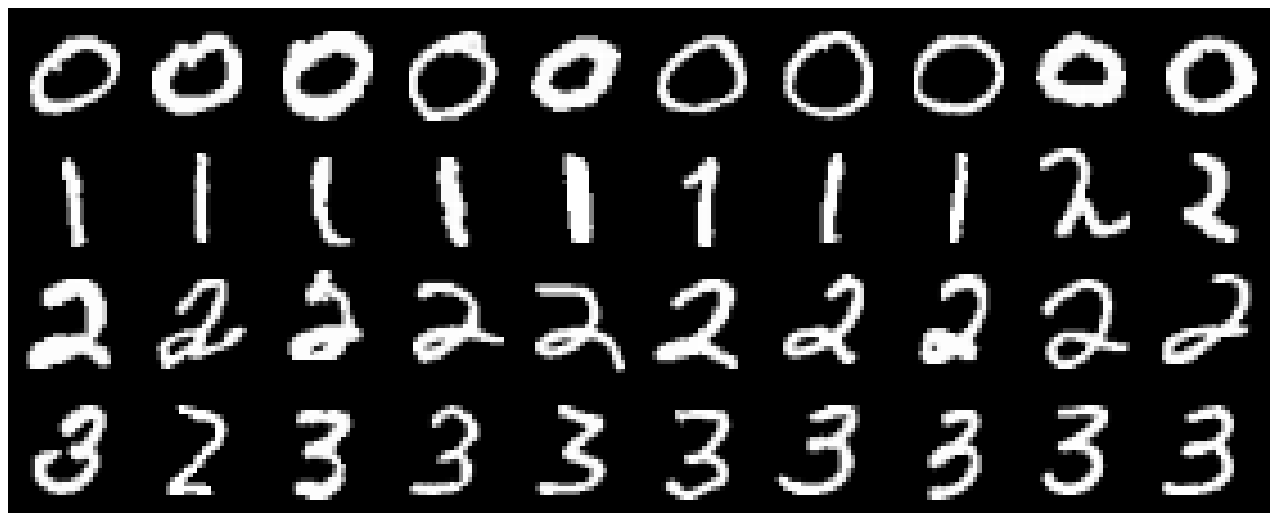}
\caption{Hamm. dist.\ with 48-bits}
\end{subfigure}
\end{align*}
\caption{Top 10 retrieved digits for 4 queries (a) on a subset of MNIST with  300 samples. Search is conducted in the original feature space (b, c) and nonlinear embedding space by t-SNE \cite{tSNE2008}
(d, e) using Euclidean distance (b, d) and Hamming distance (c, e). }
\label{query_digit}
\end{figure*}

Embedding the original data into a low dimensional space while
simultaneously preserving the inherent neighborhood structure is
critical for learning compact, effective hash codes.
In general, nonlinear manifold learning  methods are more
powerful than  linear dimensionality reduction techniques, as they are
able to more effectively preserve the \textit{local} structure of the
input data without assuming \textit{global} linearity
\cite{Kumar2008}.
The geodesic distance on a manifold has been shown to outperform the Euclidean distance in the high-dimensional  space for image retrieval \cite{He_manifold_2004}, for example.
Figure~\ref{query_digit} demonstrates that searching using
either the Euclidean or Hamming distance after nonlinear embedding results in more semantically accurate
neighbors than the same search in the original feature space, and thus
that low-dimensional embedding may actually improve retrieval or
classification performance.
However,
the only widely used nonlinear embedding method for hashing is Laplacian eigenmaps (LE)
(\eg, in \cite{SH08,STH2010,AGH2011}).
Other effective manifold learning approaches (\eg, Locally Linear Embedding (LLE) \cite{LLE2000},
Elastic Embedding (EE) \cite{EE2010} or t-Distributed Stochastic Neighbor Embedding (t-SNE) \cite{tSNE2008}) have rarely been explored for hashing. Very recently, the authors of \cite{Irie_2014_CVPR} choose to jointly minimize the LLE embedding error and the quantization loss with an orthogonal rotation.

One problem hindering the use of manifold learning for hashing is that
these methods do not directly scale to large datasets. For example, to
construct the neighborhood graph (or pairwise similarity matrix) in
these algorithms for $n$ data points is $\mathit{O}(n^2)$ in time,
which is intractable for large datasets.
The second problem is that they are typically non-parametric and thus
cannot efficiently solve the critical out-of-sample extension problem.
This fundamentally limits their application to hashing, as generating
codes for new samples is an essential part of the problem.   One of
the widely used solutions for the methods involving spectral
decomposition (\eg, LLE, LE and  isometric feature mapping (ISOMap) \cite{ISOMAP2000}) is the
Nystr\"om extension \cite{Nystrom2004,Kumar2008}, which solves the problem by
learning  eigenfunctions of a kernel matrix. As mentioned in
\cite{SH08}, however, this is impractical for large-scale hashing
since the Nystr\"om extension is as expensive as doing exhaustive
nearest neighbor search ($\mathit{O}(n)$). A more significant problem,
however,  is the fact that the Nystr\"om extension cannot be directly
applied to {\it non-spectral} manifold learning methods such as t-SNE.

In order to address the out-of-sample extension problem, this study proposes a
new non-parametric regression approach which is both efficient and
effective.  This method allows rapid assignment of new codes to
previously unseen data in a manner which preserves the underlying
structure of the manifold.  Having solved the out-of-sample extension
problem, a method by which a learned manifold may be used
as the basis for a binary encoding is introduced.  This method is designed so as to
generate encodings which reflect the geodesic distances along such
manifolds.  On this basis,  a range of new embedding
approaches based on a variety of manifold learning methods are developed.  The best
performing of these is based on manifolds identified through t-SNE,
which has been shown to  be effective in discovering semantic
manifolds amongst the set of all images \cite{tSNE2008}.

Given the computational complexity of many manifold learning methods, it is shown in this work
 that it is possible to learn the manifold on the basis of a small subset of the data $\BB$ (with size $m \ll n$), and subsequently to inductively insert the remainder of the data, and any out-of-sample data, into the embedding in $\mathit{O}(m)$ time per point.
This process leads to an embedding method  labelled as \idh (\IDH) which is shown to
outperform state-of-the-art methods on several large scale datasets both quantitatively and qualitatively.

As an extension, this study shows that the proposed \IDH framework can be naturally improved by minimizing the quantization error, for example, through orthogonal rotations. Significant performance gains are achieved by this simple step, as shown in Section \ref{Sec:ITQ}. Based on supervised subspace learning, this study also presents a supervised inductive manifold hashing framework (IMHs) (Section \ref{Sec:supervised}), which is shown to significantly advance the semantic retrieval performance of \IDH.

The rest of this paper is organized as follows. In Section \ref{related}, some representative hashing methods related are briefly reviewed. Section \ref{Sec:IMH} describes the proposed \idh framework, followed by the experimental results in Section \ref{Sec:exp}. The \IDH method is further shown  to be improved by learned orthogonal rotations in Section \ref{Sec:ITQ}. Section \ref{Sec:supervised} introduces the supervised  extension of the inductive manifold hashing framework based on supervised subspace learning.

This paper is an extended version of the work previously published in  \cite{CVPR13aShen}. Major improvements over
\cite{CVPR13aShen} include the quantization error minimization by learned rotations (Section \ref{Sec:ITQ}) and the supervised extension (Section \ref{Sec:supervised}).

\section{Related work}
\label{related}
Learning based or data-dependent hashing has achieved considerable attention recently in computer vision, machine learning and information retrieval community. Many hashing methods have been proposed by applying different learning algorithms, including unsupervised methods \cite{PCA-ITQ,AGH2011,SH08,STH2010}, and (semi-)supervised methods \cite{BRE2009}\cite{KSH2012}\cite{CVPR14Lin}\cite{SSH2012}.
       In this section,   some representative unsupervised hashing methods related to the proposed method are briefly reviewed.
\vspace{-0.4cm}
\subsection{Spectral Hashing}
Weiss et al. \cite{SH08} formulated the spectral hashing (SH) problem as
\begin{align}
\label{EQ:SH}
\min_\BY & \sum_{\Bx_i,\Bx_j \in \BU} \w(\Bx_i, \Bx_j) \|\By_i -\By_j\|^2
\\
\st & \BY \in \{-1,1\}^{n \times r}\notag, \,
  \BY^{\T} \BY = n\BI \notag, \,
  \BY^\T \Bone = 0. \notag
\end{align}
Here $\By_i \in \{-1, 1\}^r$, the $i$th row in $\BY$, is the hash code one wants to learn for $\Bx_i \in \mathbf{R}^d$, which is one of the $n$ data points in the training data set $\BU$.
$\BW \in \mathbf{R}^{n \times n}$ with $\BW_{ij} = \w(\Bx_i, \Bx_j) = \exp(-\|\Bx_i - \Bx_j\|^2/\sigma^2)$ is the graph affinity matrix, where $\sigma$ is the bandwidth parameter.
 $\BI$ is  the identity matrix.
The last two constraints force the learned hash bits to be uncorrelated and balanced, respectively.
By removing the first constraint (\ie, \textit{spectral relaxation} \cite{SH08}),  $\BY$ can be easily obtained by
spectral decomposition on the Laplcaian matrix $\BL =
 \BD - \BW$, where $\BD = \diag(\BW \Bone)$ and $\Bone$ is the vector with all ones.
 However, constructing $\BW$ is $\mathit{O}(dn^2)$ (in time) and calculating the Nystr\"om extension for a new point is $\mathit{O}(rn)$, which are both intractable for large datasets.
It is assumed in SH \cite{SH08}, therefore, that the data are sampled
from a uniform distribution, which leads to a simple analytical
eigenfunction solution of 1-D Laplacians. However, this strong
assumption is not true in practice and the manifold structure of the
original data are thus destroyed \cite{AGH2011}.

 SH was extended into the tangent space in \cite{chaudhry2010fast},
 however, based on the same uniform assumption.
 \cite{chaudhry2010fast} also proposed a non-Euclidean SH algorithm
 based on nonlinear clustering, which is $\mathit{O}(n^3)$ for
 training and $\mathit{O}(m + n/m)$ for testing.
Weiss et al. \cite{MDSH} then improved SH by expanding the codes  to include the outer-product eigenfunctions instead of  only single-dimension eigenfunctions in SH, \ie, multidimensional spectral hashing (MDSH).

\subsection{Graph based hashing}
To efficiently solve problem \eqref{EQ:SH}, anchor graph hashing (AGH)  \cite{AGH2011} approximated the affinity matrix $\BW$ by the low-rank matrix $\hat{\BW} = \BZ\Lambda^{-1}\BZ$, where $\BZ \in \mathbf{R}^{n \times m}$ is the
normalized affinity matrix (with $k$ non-zeros in each row) between the training samples and $m$ {\textit{anchors}}
(generated by $K$-means), and $\Lambda^{-1}$ normalizes $\hat{\BW}$ to be doubly stochastic.
Then  the desired hash functions may be efficiently identified by binarizing
the Nystr\"om eigenfunctions \cite{Nystrom2004} with the approximated affinity matrix $\hat{\BW}$.  AGH
is thus efficient, in that it has linear training time and constant search time, but
as is the case for SH \cite{SH08}, the generalized eigenfunction is derived only  for the Laplacian eigenmaps embedding.

Different from SH and AGH, Locally Linear Hashing (LLH \cite{Irie_2014_CVPR}) constructs the graph affinity by locality-sensitive sparse coding to better capture the local linearity of manifolds. With the obtained affinity matrix, LLH formulates hashing as a joint optimization problem of LLE embedding error and quantization loss.

\subsection{Self-Taught Hashing}
Self-taught hashing (STH) \cite{STH2010} addressed the out-of-sample problem by a novel way:  hash functions are obtained by
training a support vector machine (SVM) classifier for each bit using the pre-learned binary codes as class labels. The binary codes were learned by directly solving \eqref{EQ:SH} with a cosine similarity function.
This process has  prohibitive computational and memory costs, however,
and training the SVM can be very time consuming for dense data.
Very recently, this idea was applied in collaboration with graph cuts to the binary coding problems to bypass continuous relaxation \cite{GCC_ECCV2014}.

\section{The proposed method}
\label{Sec:IMH}
\subsection{Inductive learning for hashing}
Assuming that one has the  manifold-based embedding $\BY := \{\By_1,
\By_2, $ $ \cdots, $ $ \By_n\}$ for the entire training data $\BX:=
\{\Bx_1, $ $ \Bx_2, $ $\cdots, $ $\Bx_n\}$.
Given a new data point $\Bx_q$, one aims to
generate an
embedding $\By_q$ which preserves the local neighborhood relationships
among its neighbors $\mathcal{N}_k(\Bx_q)$ in $\BX$.
The following simple objective is utilized:
\begin{equation}
\label{EQ:newdata}
\mathcal{C}(\By_q) = \sum_{i = 1}^n \w(\Bx_q, \Bx_i) \|\By_q -\By_i\|^2,
\end{equation}
where
\begin{align*}
\w(\Bx_q, \Bx_i) =
\left\{
\begin{array}{cl}
\exp (-\|\Bx_q - \Bx_i\|^2/\sigma^2), & \text{if } \Bx_i \in \mathcal{N}_k(\Bx_q),\\
0 & \text{otherwise}.
\end{array}\right.
\end{align*}
Minimizing \eqref{EQ:newdata} naturally
uncovers an embedding for the new point on the basis of its nearest neighbors on the
low-dimensional manifold initially learned on the base set.
That is, in the low-dimensional space, the new embedded location for the point should be close to those of the points close to it in the original space.

Differentiating $\mathcal{C}(\By_q)$ with respect to $\By_q$,  one obtains
\begin{equation}
\frac{\partial \mathcal{C}(\By_q)}{\By_q}\bigg|_{\By_q = \By_q^{\star}} = \,2\sum_{i = 1}^n \w(\Bx_q, \Bx_i) (\By_q^{\star} -\By_i) = 0,
\end{equation}
which leads to the optimal solution
\begin{equation}
\label{EQ:Induction}
\By_q^{\star} = \frac{\sum_{i = 1}^n \w(\Bx_q, \Bx_i) \By_i}{\sum_{i = 1}^n \w(\Bx_q, \Bx_i)}.
\end{equation}
Equation \eqref{EQ:Induction} provides a simple inductive formulation for the embedding: produce the embedding for a new data point by a (sparse) locally linear combination of the base embeddings.

The proposed approach here is inspired by
        Delalleau et al.\  \cite{Olivier2005}, where they have focused
        on non-parametric graph-based learning in  semi-supervised
        classification. The aim of this study here is completely different: the present work tries
        to scale up the manifold learning process for hashing in an
        unsupervised manner.

 The resulting solution \eqref{EQ:Induction} is consistent with the basic
smoothness
 assumption in manifold learning, that close-by data points lie on or close to a locally linear manifold \cite{LLE2000,ISOMAP2000,LE2001}.
 This local-linearity  assumption has also been widely used in semi-supervised learning \cite{Olivier2005,LLC2009},  image coding \cite{wang2010locality}, and similar.
This paper proposes to apply this assumption to hash function learning.

However, as aforementioned,  \eqref{EQ:Induction} does not scale well
for both computing $\BY$ ($\mathit{O}(n^2)$ \eg, for LE) and
out-of-sample extension ($\mathit{O}(n)$), which is intractable for
large scale tasks.
Next, it is  shown that the following prototype algorithm is able to approximate $\By_q$ using only a small base set well.

\subsection{The prototype algorithm}
This prototype algorithm is based on entropy numbers defined below.
\begin{definition} [Entropy numbers\cite{HerWil02c}] Given any $Y \subseteq \RR^r$ and $m \in \NN$, the $m$-th entropy number $\epsilon_m(Y)$ of $Y$ is defined as
\[\epsilon_{m}(Y) := \inf\{\epsilon>0| \Ncal (\epsilon, Y, \|\cdot - \cdot \|) \leq m \},\]
where $\Ncal$ is the covering number. This means $\epsilon_{m}(Y)$ is the smallest radius that $Y$ can be covered by less or equal to $m$ balls.
\end{definition}

Inspired by Theorem 27 of \cite{HerWil02c},  a prototype algorithm is constructed below. One can use $m$ balls to cover $Y$, thus obtain $m$ disjoint nonempty subsets $Y_1, Y_2, \cdots, Y_m$ such that for any $\epsilon >\epsilon_{m}(Y) $, $\forall j \in \{1,\cdots,m\}, \exists \Bc_j \in \RR^r, \st \forall \By \in Y_j, \|\Bc_j - \By\| \leq \epsilon$ and $\bigcup_{j=1}^m Y_j = Y$. One can see that each $Y_j$ naturally forms a cluster with the center $\Bc_j$ and the index set $I_j = \{i|\By_i \in Y_j\}$.

 Let $\alpha_i =\frac{ \w(\Bx_q, \Bx_i) }{\sum_{j =1}^n \w(\Bx_q, \Bx_j)} $ and $C_j = \sum_{i\in I_j} \alpha_i$.  For each cluster index set $I_j, j = 1, \cdots, m$,  $\ell_j = \lfloor mC_j+1 \rfloor$ many indices are randomly drawn from $I_j$ proportional to their weight $\alpha_i$. That is, for $\mu \in \{1,\cdots, \ell_j\}$,  the $\mu$-th randomly drawn index  $u_{j,\mu}$,
 \[ \Pr ( u_{j,\mu} = i) = \frac{\alpha_i}{C_j}, \forall j \in \{1,\cdots,m\}. \]
  $\hat{\By}_q$ is constructed as
 \begin{align}
 \label{eq:infer-sample}
 \hat{\By}_q = \sum_{j=1}^m \frac{C_j}{\ell_j} \sum_{\mu=1}^{\ell_j}\By_{u_{j,\mu}}.
 \end{align}

 \begin{lemma} There is at most $2m$ many unique $\By_{u_{j,\mu}}$ in $\hat{\By}_q$.
 \end{lemma}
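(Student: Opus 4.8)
The plan is to reduce the statement to a simple counting bound on the \emph{total} number of summands appearing in \eqref{eq:infer-sample}. First I would observe that $\hat{\By}_q$ is, by construction, a linear combination of the vectors $\By_{u_{j,\mu}}$ over all pairs $(j,\mu)$ with $j \in \{1,\dots,m\}$ and $\mu \in \{1,\dots,\ell_j\}$. Hence the number of \emph{distinct} vectors $\By_{u_{j,\mu}}$ occurring in $\hat{\By}_q$ is at most the number of such pairs, namely $\sum_{j=1}^m \ell_j$. (Within a single cluster the indices $u_{j,1},\dots,u_{j,\ell_j}$ are drawn with replacement, so the true number of distinct indices may be strictly smaller; but for the lemma only this crude upper bound is needed.) So it suffices to show $\sum_{j=1}^m \ell_j \le 2m$.

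The next step is to bound $\sum_{j=1}^m \ell_j$ using the definition $\ell_j = \lfloor m C_j + 1 \rfloor$. Since $\lfloor t \rfloor \le t$, we get $\ell_j \le m C_j + 1$, and therefore
\[
\sum_{j=1}^m \ell_j \;\le\; m \sum_{j=1}^m C_j \;+\; m .
\]
The one point that requires a little care is the identity $\sum_{j=1}^m C_j = 1$. This follows from the covering construction: the subsets $Y_1,\dots,Y_m$ are disjoint with $\bigcup_{j=1}^m Y_j = Y$, so the index sets $I_j = \{\, i \mid \By_i \in Y_j \,\}$ form a partition of $\{1,\dots,n\}$. Consequently
\[
\sum_{j=1}^m C_j \;=\; \sum_{j=1}^m \sum_{i \in I_j} \alpha_i \;=\; \sum_{i=1}^n \alpha_i \;=\; 1,
\]
where the last equality is immediate from $\alpha_i = \w(\Bx_q,\Bx_i) / \sum_{j=1}^n \w(\Bx_q,\Bx_j)$. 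Combining the two displays yields $\sum_{j=1}^m \ell_j \le m\cdot 1 + m = 2m$, and hence at most $2m$ distinct $\By_{u_{j,\mu}}$, which is the claim.

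I do not expect a genuine obstacle here: the only thing to be explicit about is that the $I_j$ partition the full training index set, so that the normalized weights $C_j$ sum to one. It is also worth noting in passing that every cluster does contribute, since $mC_j + 1 \ge 1$ forces $\ell_j \ge 1$, and that the bound $2m$ is essentially tight --- if the weight is spread so that $C_j \approx 1/m$ for all $j$, then $\ell_j = \lfloor 2 \rfloor = 2$ for each $j$, giving $\sum_{j} \ell_j = 2m$ draws and generically $2m$ distinct indices.
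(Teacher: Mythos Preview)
Your argument is correct and matches the paper's proof exactly: bound the number of distinct $\By_{u_{j,\mu}}$ by $\sum_{j=1}^m \ell_j$, use $\ell_j \le mC_j+1$, and conclude via $\sum_{j=1}^m C_j = 1$ that $\sum_{j=1}^m \ell_j \le 2m$. The paper compresses this into the single line $\sum_{j=1}^m \ell_j \leq \sum_{j=1}^m (mC_j+1) = 2m$, leaving the partition identity $\sum_j C_j = 1$ implicit; your explicit justification of that step is a welcome clarification.
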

 \begin{proof}
 $\sum_{j=1}^m \ell_j \leq \sum_{j=1}^m  (mC_j+1) = 2m$.
 \end{proof}

  The following lemma shows that through the prototype algorithm the mean is preserved and variance is small.
  \begin{lemma}
  \label{lem:mean-var}
   The following holds
  \begin{align}
  \E[\hat{\By}_q] ={\By}_q, \Var(\hat{\By}_q) \leq \frac{\epsilon^2}{m}.
  \end{align}
 \end{lemma}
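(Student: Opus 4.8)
The plan is to treat $\hat{\By}_q$ as a sum of independent random vectors, one block per cluster, and to compute its mean and variance blockwise. First I would fix a cluster index $j$ and look at the inner sum $S_j := \frac{C_j}{\ell_j}\sum_{\mu=1}^{\ell_j}\By_{u_{j,\mu}}$. The indices $u_{j,1},\dots,u_{j,\ell_j}$ are i.i.d.\ draws from $I_j$ with $\Pr(u_{j,\mu}=i)=\alpha_i/C_j$, so $\E[\By_{u_{j,\mu}}] = \sum_{i\in I_j}\frac{\alpha_i}{C_j}\By_i$, and hence $\E[S_j] = C_j\cdot\sum_{i\in I_j}\frac{\alpha_i}{C_j}\By_i = \sum_{i\in I_j}\alpha_i\By_i$. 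Summing over $j=1,\dots,m$ and using $\bigcup_j I_j=\{1,\dots,n\}$ together with the definition $\alpha_i = \w(\Bx_q,\Bx_i)/\sum_j\w(\Bx_q,\Bx_j)$, I get $\E[\hat{\By}_q] = \sum_{i=1}^n \alpha_i\By_i = \By_q$ by \eqref{EQ:Induction}. That settles the first claim.

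For the variance, the key observation is that the blocks $S_1,\dots,S_m$ are mutually independent (the draws in distinct clusters use disjoint index sets and independent randomness), so $\Var(\hat{\By}_q)=\sum_{j=1}^m\Var(S_j)$, where I interpret $\Var$ as $\E\|\cdot - \E[\cdot]\|^2$ (trace of the covariance). Within a block, the $\ell_j$ summands are i.i.d., so $\Var(S_j) = \frac{C_j^2}{\ell_j^2}\cdot\ell_j\cdot\Var(\By_{u_{j,1}}) = \frac{C_j^2}{\ell_j}\Var(\By_{u_{j,1}})$. Now I use the covering property: every $\By_i$ with $i\in I_j$ lies within $\epsilon$ of the center $\Bc_j$, and since the variance about the mean is at most the second moment about any fixed point, $\Var(\By_{u_{j,1}}) = \E\|\By_{u_{j,1}} - \E[\By_{u_{j,1}}]\|^2 \le \E\|\By_{u_{j,1}} - \Bc_j\|^2 \le \epsilon^2$. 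Therefore $\Var(S_j) \le \frac{C_j^2}{\ell_j}\epsilon^2$.

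It remains to sum the bound. Here I use $\ell_j = \lfloor mC_j+1\rfloor \ge mC_j$, which gives $\frac{C_j^2}{\ell_j}\le\frac{C_j^2}{mC_j}=\frac{C_j}{m}$, hence $\Var(\hat{\By}_q)\le\frac{\epsilon^2}{m}\sum_{j=1}^m C_j = \frac{\epsilon^2}{m}$, since $\sum_j C_j = \sum_j\sum_{i\in I_j}\alpha_i = \sum_{i=1}^n\alpha_i = 1$. The main thing to be careful about — and the step I'd expect to need the most attention — is the independence-of-blocks claim and the precise reading of $\Var$ for a vector-valued quantity; once those are pinned down, the calculation is routine, and the covering-number input enters only through the clean per-cluster bound $\|\By_i-\Bc_j\|\le\epsilon$. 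One should also note the bound holds for any $\epsilon>\epsilon_m(Y)$, so by taking an infimum the statement holds with $\epsilon=\epsilon_m(Y)$ as written.
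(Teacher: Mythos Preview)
Your proof is correct and follows essentially the same route as the paper: compute the mean by linearity and the per-draw law $\Pr(u_{j,\mu}=i)=\alpha_i/C_j$, then bound the variance blockwise using independence, the covering bound $\|\By_i-\Bc_j\|\le\epsilon$, and $\ell_j\ge mC_j$. If anything, your write-up is slightly more careful than the paper's --- you sum $\sum_j C_j/m = 1/m$ directly, whereas the paper writes $\sum_j C_j^2/m = 1/m$ (which should really be $\le$, using $C_j^2\le C_j$).
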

  \begin{proof}
  \begin{align*}
 &\E[\hat{\By}_q] = \E[ \sum_{j=1}^m \frac{C_j}{\ell_j} \sum_{\mu=1}^{\ell_j}\By_{u_{j,\mu}}]= \sum_{j=1}^m \frac{C_j}{\ell_j} \sum_{\mu=1}^{\ell_j}\E[\By_{u_{j,\mu}}]\\
  & =\sum_{j=1}^m \frac{C_j}{\ell_j} \sum_{\mu=1}^{\ell_j} \sum_{i \in I_j} \frac{\alpha_i}{C_j}\By_i =\sum_{j=1}^m \sum_{i \in I_j} \alpha_i\By_i = {\By}_q.\\
 & \Var(\hat{\By}_q) = \sum_{j=1}^m \sum_{\mu=1}^{\ell_j}\Var(\frac{C_j}{\ell_j} \By_{u_{j,\mu}}) \leq \sum_{j=1}^m \frac{C_j^2}{\ell_j^2} \sum_{\mu=1}^{\ell_j}\epsilon^2\\
 & = \sum_{j=1}^m \frac{C_j^2}{\ell_j} \epsilon^2 \leq \sum_{j=1}^m \frac{C_j^2 }{mC_j} \epsilon^2 = \frac{ \sum_{j=1}^m C_j^2 }{m} \epsilon^2 =  \frac{ \epsilon^2}{m}.
 \end{align*}
 \end{proof}

   \begin{theorem}
   \label{thm:appox}
   For any even number $n' \leq n$. If Prototype Algorithm uses $n'$ many non-zero $\By \in Y$ to express $\hat{\By}_q$, then
  \begin{align}
  \Pr[\|\hat{\By}_q - {\By}_q\| \ge t ] < \frac{2(\epsilon_{\frac{n'}{2}}(Y))^2}{n't^2}.
  \end{align}
 \end{theorem}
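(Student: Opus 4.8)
The plan is to read off the theorem as a packaging of the two lemmas already in hand. First I would observe that asking for $\hat{\By}_q$ to be expressed through exactly $n'$ nonzero base embeddings pins down the number of clusters: by the counting lemma (at most $2m$ distinct $\By_{u_{j,\mu}}$ occur in \eqref{eq:infer-sample}), running the prototype algorithm with $m := n'/2$ clusters produces an estimator supported on at most $n'$ base points, and this choice is legitimate precisely because $n'$ is assumed even, so $m \in \NN$. From here on I fix $m = n'/2$.

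Next I would fix an admissible covering radius $\epsilon > \epsilon_{m}(Y)$, so that the partition $Y_1,\dots,Y_m$ with centers $\Bc_1,\dots,\Bc_m$ and the construction \eqref{eq:infer-sample} are available, and invoke Lemma~\ref{lem:mean-var}: $\hat{\By}_q$ is unbiased, $\E[\hat{\By}_q] = \By_q$, and its total variance obeys $\Var(\hat{\By}_q) = \E\|\hat{\By}_q - \By_q\|^2 \le \epsilon^2/m$. Then I apply Markov's inequality to the nonnegative scalar $\|\hat{\By}_q - \By_q\|^2$:
\[
\Pr\bigl[\|\hat{\By}_q - \By_q\| \ge t\bigr] = \Pr\bigl[\|\hat{\By}_q - \By_q\|^2 \ge t^2\bigr] \le \frac{\E\|\hat{\By}_q - \By_q\|^2}{t^2} \le \frac{\epsilon^2}{m\,t^2}.
\]
Since this holds for every $\epsilon > \epsilon_{m}(Y)$, taking $\epsilon \downarrow \epsilon_{m}(Y)$ and substituting $m = n'/2$ gives the stated bound $2\,\epsilon_{n'/2}(Y)^2/(n' t^2)$. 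The strict inequality is inherited from the slack already present in the estimates feeding Lemma~\ref{lem:mean-var} (the inequalities $\ell_j \ge mC_j$ and $\sum_j C_j^2 \le (\sum_j C_j)^2 = 1$, together with $\epsilon$ lying strictly above the entropy number), so one keeps $<$ rather than $\le$ throughout.

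Essentially all of the work is hidden in the two lemmas, which are already proved, so the theorem itself is a short assembly. The single point I would be careful about is the transition from the scalar second-moment bound to a tail bound on the Euclidean norm of the $\RR^r$-valued vector $\hat{\By}_q$: I would make explicit that ``$\Var$'' in Lemma~\ref{lem:mean-var} denotes $\E\|\hat{\By}_q - \By_q\|^2$ (the trace of the covariance), which is exactly the quantity Markov's inequality consumes, so that no dimension-dependent constant enters. The only other bookkeeping item is the parity of $n'$, which is what licenses $m = n'/2$; this is why the hypothesis restricts to even $n'$, and it is the reason $\epsilon_{n'/2}$ rather than $\epsilon_{n'}$ appears in the bound.
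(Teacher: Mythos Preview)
Your proposal is correct and matches the paper's proof in substance: the paper invokes Chebyshev's inequality together with Lemma~\ref{lem:mean-var} and then sends $\epsilon \to \epsilon_{n'/2}(Y)$, which is exactly your Markov-on-$\|\hat{\By}_q-\By_q\|^2$ step rephrased (for a centered vector, Chebyshev \emph{is} Markov applied to the squared norm). Your write-up is in fact more careful than the paper's, since you make explicit the identification $m=n'/2$ via Lemma~1 and the meaning of $\Var$ as $\E\|\hat{\By}_q-\By_q\|^2$, both of which the paper leaves implicit.
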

  \begin{proof}
 Via Chebyshev's inequality and Lemma~\ref{lem:mean-var}, one gets
   \begin{align*}
   \Pr\Big( \|\hat{\By}_q - {\By}_q\| \ge k \sqrt{\Var(\hat{\By}_q)} \Big) \leq \frac{1}{k^2}.
    \end{align*} Let $t = k \sqrt{\Var(\hat{\By}_q)}$ and $\epsilon \to \epsilon_{\frac{n'}{2}}(Y)$ yields the theorem.
 \end{proof}

\begin{corollary} For an even number $n'$, any $\epsilon > \epsilon_{\frac{n'}{2}}(Y)$, any $\delta \in (0,1)$ and any $t >0$, if $n' \ge  \frac{2\epsilon^2}{\delta t^2}$, then with probability at least $1-\delta$,
\[\|\hat{\By}_q - {\By}_q\| < t.\]
\end{corollary}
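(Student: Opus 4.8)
The plan is to derive the corollary as a direct consequence of Theorem~\ref{thm:appox} by choosing the threshold in the tail bound so that the right-hand side equals $\delta$. First I would note that Theorem~\ref{thm:appox} gives, for any even $n' \le n$, the estimate $\Pr[\|\hat{\By}_q - {\By}_q\| \ge t] < \frac{2(\epsilon_{n'/2}(Y))^2}{n' t^2}$; since $\epsilon_{n'/2}(Y)$ is the infimum over admissible radii and the corollary fixes any $\epsilon > \epsilon_{n'/2}(Y)$, monotonicity of the bound in the radius lets me replace $\epsilon_{n'/2}(Y)$ by $\epsilon$, so $\Pr[\|\hat{\By}_q - {\By}_q\| \ge t] < \frac{2\epsilon^2}{n' t^2}$.

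Next I would impose the hypothesis $n' \ge \frac{2\epsilon^2}{\delta t^2}$, which rearranges to $\frac{2\epsilon^2}{n' t^2} \le \delta$. Combining this with the previous display yields $\Pr[\|\hat{\By}_q - {\By}_q\| \ge t] < \delta$, and therefore $\Pr[\|\hat{\By}_q - {\By}_q\| < t] > 1 - \delta$, which is exactly the claim. In effect the corollary is just Theorem~\ref{thm:appox} solved for the sample size needed to reach a prescribed confidence level $1-\delta$.

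The only genuinely fiddly points — not really obstacles — are bookkeeping ones. One is the evenness of $n'$: Theorem~\ref{thm:appox} is stated for even $n' \le n$, so I would either assume $n'$ is taken even (the natural reading, matching the theorem's phrasing), or remark that replacing a given $n'$ by the nearest even integer $\ge n'$ only strengthens the inequality $n' \ge \frac{2\epsilon^2}{\delta t^2}$ and keeps $\epsilon > \epsilon_{n'/2}(Y)$ since $\epsilon_m(Y)$ is non-increasing in $m$. The second is making sure the strict/non-strict inequalities line up: Theorem~\ref{thm:appox} gives a strict "$<$" on the probability bound, and the hypothesis on $n'$ gives "$\le \delta$", so the composed bound is strict, "$< \delta$", and the complementary event has probability strictly greater than $1-\delta$, i.e. "at least $1-\delta$" holds comfortably. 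I expect no real difficulty here; the substance was already done in Lemma~\ref{lem:mean-var} and Theorem~\ref{thm:appox}, and this corollary merely repackages the tail bound in the standard "$(\epsilon,\delta)$" form.
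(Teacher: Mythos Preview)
Your proposal is correct and follows essentially the same route as the paper: invoke Theorem~\ref{thm:appox}, replace $\epsilon_{n'/2}(Y)$ by any $\epsilon > \epsilon_{n'/2}(Y)$, and rearrange the hypothesis $n' \ge \tfrac{2\epsilon^2}{\delta t^2}$ to bound the tail probability by $\delta$. The paper's proof is terser and omits the bookkeeping remarks you add about evenness and strict-versus-nonstrict inequalities, but the logical content is identical.
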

  \begin{proof} Via Theorem~\ref{thm:appox},  for $\epsilon > \epsilon_{\frac{n'}{2}}(Y)$,
  $ \Pr[\|\hat{\By}_q - {\By}_q\| \ge t ] < \frac{2\epsilon^2}{n't^2}.$ Let
   $   \delta \ge \frac{2\epsilon^2}{n't^2}$,
       the following holds $  n' \ge \frac{2\epsilon^2}{ \delta t^2}$.
 \end{proof}

  The quality of the approximation depends on $\epsilon_{\frac{n'}{2}}(Y)$ and $n'$. If data has strong clustering pattern, \ie data within each cluster are very close to cluster center, one will have small $\epsilon_{\frac{n'}{2}}(Y)$, hence better approximation. Likewise, the bigger $n'$ is, the better approximation is.

\subsection{Approximation of the prototype algorithm}
For a query point $\Bx_q$, the prototype algorithm samples from clusters and then construct $\hat{\By}_q$. The clusters can be obtained via clustering algorithm such as K-means. For each cluster, the higher $C_j = \sum_{i\in I_j} \alpha_i$, the more draws are made. At least one draw is made from each cluster. Since the $n$ could be potentially massive, it is impractical to rank (or compute and keep a few top ones) $\alpha_i$ with in each cluster. Moreover, $\w(\Bx_q, \Bx_j)$ depends on $\Bx_q$ --- for a different query point $\Bx_{q'}$, $\w(\Bx_{q'}, \Bx_i)$ may be very smaller even if $\w(\Bx_q, \Bx_i) $ is high. Thus one needs to consider the entire $X$ instead of a single $\Bx_q$.

Let $\alpha_i(\Bx_q) = \frac{ \w(\Bx_q, \Bx_i) }{\sum_{j =1}^n \w(\Bx_q, \Bx_j)}$. Ideally, for each cluster, one wants to select the $\By_i$ that has high overall weight $O_i = \sum_{\Bx_q \in X} \alpha_i(\Bx_q)$. For large scale $X$, the reality is that one does not have access to $\w(\Bx, \Bx')$ for all $\Bx, \Bx' \in X$. Only limited information is available such as cluster centers $\{\Bc_j, j \in \{1, \cdots, m\}\}$ and $\w(\Bc_j, \Bx), \Bx \in X $. Fortunately, the clustering result gives useful information about $O_i$.
The cluster centers $\{\Bc_j, j \in \{1, \cdots, m\}\}$ have the largest overall weight w.r.t the points from their own cluster, \ie $\sum_{{i} \in I_j} { \w({\Bc_j}, \Bx_i) }$.
This suggests one should select all cluster centers to express $\hat{\By}_q$.
 For a base set $B$, and any query point $\Bx_q$,
the embedding is predicted as
\begin{align}
\label{EQ:infer-anchor}
\hat{\By}_q =  \frac{\sum_{\Bx \in B} \w(\Bx_q, \Bx) \By}{\sum_{\Bx \in B} \w(\Bx_q, \Bx)}.
\end{align}

Following many methods in the area (\eg, \cite{SH08,AGH2011}), the general inductive hash function is formulated by  binarizing the low-dimensional embedding
\begin{equation}
\label{EQ:hash_fun}
h(\Bx) = \sgn\left( \frac{\sum_{j = 1}^m \w(\Bx,\Bc_j) \By_j}{\sum_{j = 1}^m \w(\Bx,\Bc_j)}\right),
\end{equation}
where $\sgn(\cdot)$ is the sign function and $\BY_{\BB}:= $ $\{\By_1,
$ $\By_2, $ $\cdots, $ $ \By_m\}$ is the embedding for
the base set $\BB $ $ := $ $ \{\Bc_1, $ $ \Bc_2, $ $ \cdots, \Bc_m\}$,
which is the cluster centers obtained by K-means.
Here the embedding $ \By_i $ are assumed to be centered on the
origin.
The proposed hashing method is termed
 \textit{\idh} (\IDH).
The inductive hash function provides a natural means for
generalization to new data, which has a constant $\mathit{O}(dm + rk)$
time.
With this, the embedding for the training data becomes %
\begin{equation}
\BY = \Bar{\BW}_{\BU\BB} \BY_{\BB},
\label{EQ:SubInduction}
\end{equation}
where $\bar{\BW}_{\BU\BB}$ is defined such that $\bar{\BW}_{ij} $ $ =
$ $\frac{\w(\Bx_i, \Bc_j)}{\sum_{i = 1}^m \w(\Bx_i,\Bc_j)}, $ $ \text{for }
\Bx_i $ $\in \BU, $ $ \Bc_j \in \BB$.

Although the objective function \eqref{EQ:newdata} is formally related to LE, it is general in preserving local similarity.
The embeddings $\BY_{\BB}$ can be learned by any appropriate
 manifold learning method which preserves the similarity of interest in the low dimensional space.
Several other embedding methods are empirically evaluated in
Section~\ref{Embedding_selection}. Actually, as will be shown, some manifold learning methods
(\eg,
t-SNE described in Section \ref{SEC:tSNE}) can be better choices for learning binary codes, although LE has been widely used.
Two methods for learning $\BY_{\BB}$ will be discussed in the sequel.

 The \idh framework is summarized in Algorithm 1. Note that the  computational cost
is dominated by K-means in the first step, which is $\mathit{O(dmnl)}$ in time (with $l$ the number of iterations). Considering that $m$ (normally a few hundreds) is much less than $n$,
and is a function of manifold complexity rather than the volume of data,
the total training time is linear in the size of training set.
If the embedding method is LE, for example, then using \IDH to compute $\BY_{\BB}$ requires constructing
 the small affinity matrix $\BW_{\BB}$ and solving $r$ eigenvectors of the $m \times m$ Laplacian matrix $\BL_{\BB}$
which is $\mathit{O}(dm^2 + rm)$.
Note that in step 3, to compute $\Bar{\BW}_{\BR\BB}$, one needs to compute the distance matrix between $\BB$ and $\BR$, which is a natural output of K-means, or can be computed additionally in $\mathit{O}(dmn)$ time.
The training process on a dataset of 70K items with 784 dimensions can thus be achieved in a few seconds on a standard desktop PC.

\begin{algorithm}[t!]
\caption{\small \idh (\IDH)}
{
\footnotesize
\textbf{Input: }
 Training data $\BU := \{\Bx_1, \Bx_2, \ldots, \Bx_n\}$, code length $r$, base set size $m$,
 neighborhood size $k$

\textbf{Output: } Binary codes $\BY := \{\By_1, \By_2, \ldots, \By_n\} \in \mathbb{B}^{n \times r}$

1) Generate the base set $\BB$ by random sampling or clustering (\eg K-means).

2) Embed $\BB$ into the low dimensional space by \eqref{EQ:tSNE}, \eqref{EQ:Obj_main}  or any other
appropriate manifold leaning method.

3) Obtain the low dimensional embedding $\BY$ for the whole dataset inductively by Equation  \eqref{EQ:SubInduction}.

4) Threshold $\BY$ at zero.
}
\label{Alg1}
\end{algorithm}

\textbf{Connection to the Nystr\"om method}
As Equation \eqref{EQ:Induction}, the Nystr\"om eigenfunction by
Bengio et al.\ \cite{Nystrom2004} also generalizes to a new point by a linear combination of a set of low dimensional embeddings:
\begin{equation*}
\phi(\Bx) = \sqrt{n}\sum_{j = 1}^{n} \tilde{\mathrm{K}}(\Bx, \Bx_j)\mathbf{V}_{r}^{j}\mathbf{\Sigma}_r^{-1}.
\end{equation*}
For LE,  $\mathbf{V}_r$ and $\mathbf{\Sigma}_r$ correspond to the top
$r$ eigenvectors and eigenvalues of a normalized kernel matrix $\tilde{K}$ with
\[
  \tilde{K}_{ij} = \tilde{ {k}}(\Bx_i, \Bx_j) = \frac{1}{n}\frac{\w(\Bx_i,\Bx_j)}{\sqrt{E_{\Bx}[\w(\Bx_i, \Bx)]E_{\Bx}[\w(\Bx, \Bx_j)]}}.
\]
In AGH \cite{AGH2011}, the formulated hash function was proved to be the corresponding Nystr\"om eigenfunction with the approximate low-rank affinity matrix.
The  Laplacian eigenmaps latent variable model (LELVM) \cite{LELVM2007} also formulated the out-of-sample mappings for LE in a manner
similar to \eqref{EQ:Induction} by combining
latent variable models.
    Both of these methods, and the proposed one, can thus be seen as applications
    of the Nystr\"om method. {\em Note, however, that the suggested method differs in
    that it is not restricted to spectral methods such as LE, and that
    the present study aims to learn binary hash functions for similarity-based search
    rather than dimensionality reduction.}
    LELVM \cite{LELVM2007} cannot be applied to other embedding
    methods other than LE.

\subsection{Stochastic neighborhood preserving hashing}
\label{SEC:tSNE}
In order to demonstrate the proposed approach,  a hashing method based on t-SNE \cite{tSNE2008}, a non-spectral embedding method, is derived below.
T-SNE is a modification of stochastic neighborhood embedding (SNE) \cite{SNE2002} which aims to overcome the tendency of that method to crowd points together in one location.
 t-SNE provides an effective technique  for visualizing data and dimensionality reduction, which is capable of preserving local structures in the high dimensional data while retaining some global structures \cite{tSNE2008}. These properties make t-SNE a good choice for nearest neighbor search. Moreover, as stated in \cite{venna2010information}, the cost function of t-SNE in fact maximizes the \textit{smoothed recall} \cite{venna2010information} of query points and their neighbors.

The original t-SNE does not scale well, as it has a time complexity which is quadratic in $n$.  More significantly, however, it has a non-parametric form, which means that there is no simple function which may be applied to out-of-sample data in order to calculate their coordinates in the embedded space.
As was proposed in the previous subsection, one
first applies t-SNE \cite{tSNE2008} to the base set $\BB$,
\begin{equation}
\min_{\BY_{\BB}}  \sum_{\Bx_i \in \BB} \sum_{\Bx_j \in \BB} p_{ij} \log\bigg(\frac{p_{ij}}{q_{ij}}\bigg).
\label{EQ:tSNE}
\end{equation}
Here $p_{ij}$ is the symmetrized conditional probability in the high dimensional space, and
 $
q_{ij}
$ is the joint probability defined using the t-distribution in the low dimensional embedding space.
The optimization problem \eqref{EQ:tSNE} is easily solved by a gradient descent procedure\footnote{See details in \cite{tSNE2008}. A Matlab implementation of t-SNE is provided by the authors of \cite{tSNE2008} at \url{http://homepage.tudelft.nl/19j49/t-SNE.html.}}.
 After obtaining the embeddings $\BY_{\BB}$ of samples $\Bx_i \in \BB$, the hash codes for the entire dataset can be easily computed %
using~\eqref{EQ:hash_fun}.
This method  is labelled \IDH-tSNE.

\subsection{Hashing with relaxed similarity preservation}
\label{SEC:app_cons}
As in the last subsection, one can compute $\BY_{\BB}$ considering local smoothness only within $\BB$. Based on equation \eqref{EQ:Induction}, in this subsection, $\BY_{\BB}$ is alternatively computed  by considering the smoothness both within $\BB$ and between $\BB$ and $\BR$. As in \cite{Olivier2005},
the objective can be easily obtained by modifying \eqref{EQ:SH} as:
\begin{align}
\label{EQ:appro_cons}
\mathcal{C}(\BY_{\BB}) & = \sum_{\Bx_i,\Bx_j \in \BB} \w(\Bx_i, \Bx_j) \|\By_i -\By_j\|^2 & (\mathcal{C}_{\BB\BB}) \notag \\ \notag
 & + \lambda \sum_{\Bx_i \in \BB, \Bx_j \in \BR} \w(\Bx_i, \Bx_j) \|\By_i -\By_j\|^2  & (\mathcal{C}_{\BB\BR})\\
\end{align}
where $\lambda$ is the trade-off parameter.
$\mathcal{C}_{\BB\BB}$ enforces smoothness of the learned embeddings within  $\BB$ while $\mathcal{C}_{\BB\BR}$ ensures the smoothness between $\BB$ and $\BR$. This formulation is actually a relaxation of \eqref{EQ:SH}, by discarding the  part which minimizes the dissimilarity within $\BR$ (denoted as $\mathcal{C}_{\BR\BR}$).
$\mathcal{C}_{\BR\BR}$ is ignored since computing the similarity matrix within $\BR$ costs $\mathit{O}(n^2)$ time. The smoothness between points in $\BR$ is implicitly ensured by \eqref{EQ:SubInduction}.

Applying equation  \eqref{EQ:SubInduction} for $\By_j, j \in \BR$ to \eqref{EQ:appro_cons}, one obtains the following problem
\begin{align}
\label{EQ:Objective1}
\min &\, \trace(\BY_{\BB}^\T (\BD_{\BB} - \BW_{\BB}) \BY_{\BB})\\ \notag
 + &\lambda \, \trace(\BY_{\BB}^\T(\BD_{\BB\BR} - \bar{\BW}_{\BR\BB}^\T \BW_{\BR\BB})\BY_{\BB}),
\end{align}
where $\BD_{\BB} = \diag(\BW_{\BB}\Bone)$ and $\BD_{\BB\BR} = \diag(\BW_{\BB\BR}\Bone)$ are both $m \times m$ diagonal matrices.
Taking the constraint in \eqref{EQ:SH}, one obtains
\begin{align}
\label{EQ:Obj_main}
\min_{\BY_{\BB}} \, & \trace(\BY_{\BB}^\T(\BM + \lambda \BT)\BY_{\BB})\\ \notag
\st & \BY_{\BB}^\T \BY_{\BB} = m\BI
\end{align}
where $\BM = \BD_{\BB} - \BW_{\BB}$, $\BT = \BD_{\BB\BR} - \bar{\BW}_{\BR\BB}^\T \BW_{\BR\BB}$.
The optimal solution $\BY_{\BB}$ of the above problem is easily obtained by identifying the $r$ eigenvectors of $\BM + \lambda\BT$ corresponding to the smallest eigenvalues (excluding the eigenvalue 0 with respect to the trivial eigenvector $\Bone$)\footnote{The parameter $\lambda$ is set to 2 in all experiments.}.
This method is named \IDH-LE in the following text.

\begin{figure}
\centering
\includegraphics[width = 0.48\textwidth]{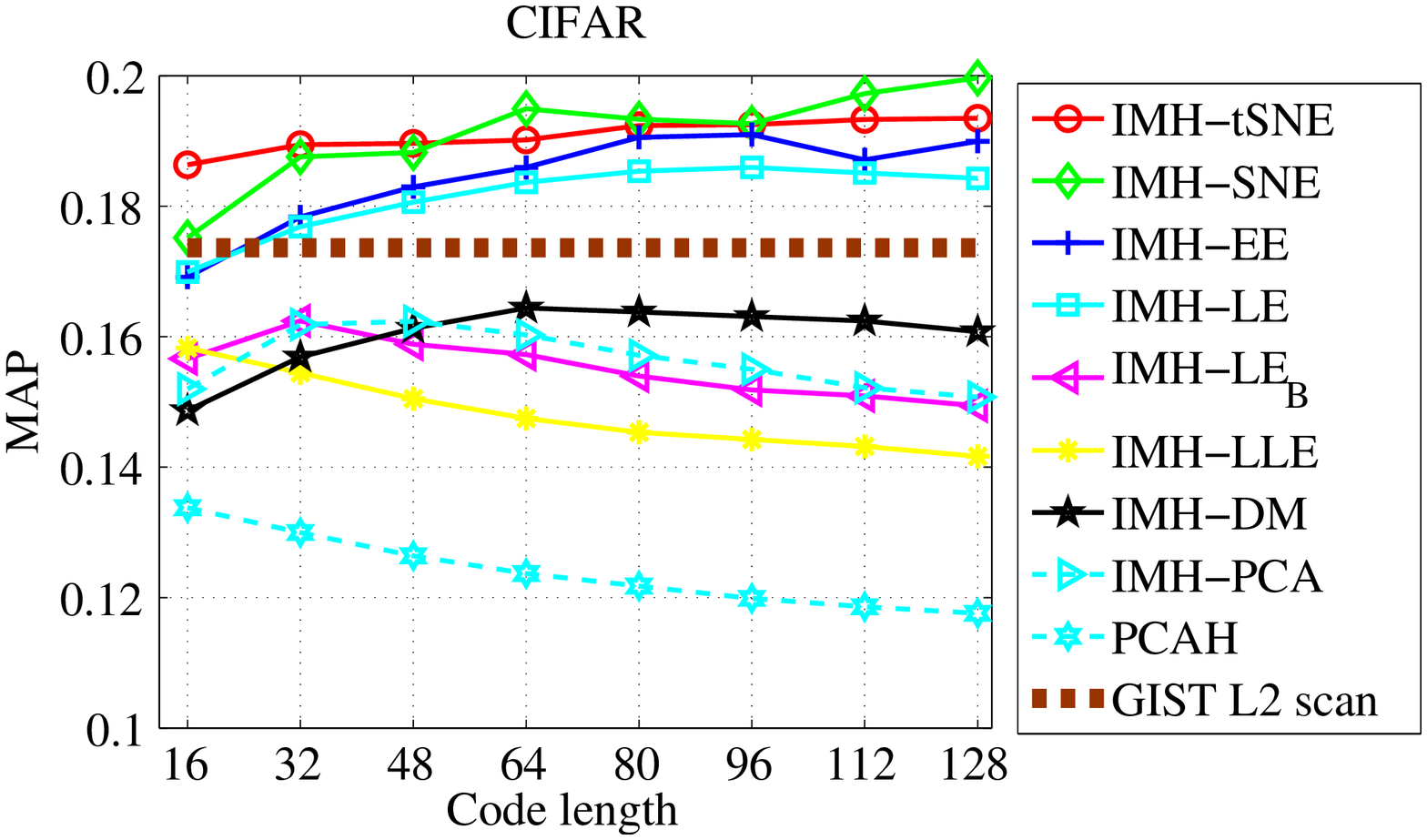}
\caption{Comparison among different manifold learning methods within
the proposed \IDH hashing framework on  CIFAR-10. \IDH with the linear PCA
(\IDH-PCA) and PCAH \cite{SSH2012} are also evaluated for comparison.
For clarity, for \IDH-LE in Section \ref{SEC:app_cons},  \IDH with the original LE algorithm on the base set $\BB$ is termed as \IDH-LE$_\BB$.
IMH-DM is IMH with the diffusion maps of \cite{lafon2006diffusion}. The base set size is set to 400.
}
\label{Fig:EmbeddingSelection}
\end{figure}

\subsection{Manifold learning methods for hashing}
\label{Embedding_selection}
In this section,  different manifold learning methods are compared for hashing within the proposed \IDH framework.
Figure~\ref{Fig:EmbeddingSelection} reports the comparative results in mean of average precision (MAP).
For comparison,  the linear PCA is also evaluated within the framework (\IDH-PCA in the figure).
  As can be clearly seen that, \IDH-tSNE, \IDH-SNE and \IDH-EE   perform slightly better than
  \IDH-LE (Section \ref{SEC:app_cons}). This is mainly because
these three methods are able to preserve local neighborhood structure while, to some extent, preventing data points from crowding together.
It is promising that all of these methods perform  better than an exhaustive $\ell_2$ scan using the uncompressed GIST features.

Figure~\ref{Fig:EmbeddingSelection} shows that
 LE (\IDH-LE$_\BB$ in the figure), the most widely used embedding method in hashing, does not perform as well as a variety of other methods (\eg, t-SNE), and in fact performs worse than PCA, which is a linear technique.
This is not surprising because LE (and similarly LLE) tends to collapse large portions of the data (and not only nearby samples in the original space) close together in the low-dimensional space.
The results are consistent with the analysis in \cite{tSNE2008,EE2010}.
Based on the above observations, we argue that  manifold learning methods (\eg t-SNE, EE), which not only preserve local similarity  but also force dissimilar data apart in the low-dimensional space, are more effective than the popular LE for hashing.
Preserving the global structures of the data is critical for the manifold learning algorithms like t-SNE used in the proposed IMH framework.
The previous work called spline regression hashing (SRH) \cite{liu2012spline} also exploited both the local and global data similarity structures of data via a Laplacian matrix, which can decrease over-fitting, as discussed in \cite{liu2012spline}. SRH captures the global similarity structure of data through constructing global non-linear hash functions, while, in the proposed method, capturing the global structures (\ie, by t-SNE) is only applied on the small base set and the hash function is formulated by a locally linear regression model.

It is interesting to see that \IDH-PCA outperforms PCAH \cite{SSH2012}
by a large margin, despite the fact that PCA is performed on the
whole training data set by PCAH. This shows that the generalization capability of \IDH
based on a very small set of data points also  works for  linear
dimensionality methods.

\section{Experimental results}
\label{Sec:exp}
 \IDH is evaluated on four large scale image datasets: CIFAR-10\footnote{\url{http://www.cs.toronto.edu/~kriz/cifar.html}}, MNIST
, SIFT1M \cite{SSH2012} and GIST1M\footnote{\url{http://corpus-texmex.irisa.fr/}}.
The MNIST dataset consists of $70,000$ images, each of 784 dimensions, of handwritten digits from `0' to `9'.   As a subset of the well-known 80M tiny image collection \cite{80Mtiny2008}, CIFAR-10 consists of 60,000 images which are manually labelled as 10 classes with $6,000$ samples for each class. Each image in this dataset is represented by a GIST feature vector \cite{GIST2001} of dimension $512$. For MNIST and CIFAR-10, the whole dataset is split into a test set with $1,000$ samples and a training set with all remaining samples.

Nine hashing algorithms is compared, including the proposed \IDH-tSNE, \IDH-LE and seven other unsupervised state-of-the-art methods: PCAH \cite{SSH2012}, SH \cite{SH08}, AGH \cite{AGH2011} and STH \cite{STH2010}, BRE \cite{BRE2009}, ITQ \cite{PCA-ITQ}, Spherical Hashing (SpH) \cite{spherical2012}.
The provided codes and suggested parameters according to the authors of these methods are used.
Due to the high computational cost of BRE and high memory cost of STH,  $1,000$ and $5,000$ training points are sampled for these two methods respectively.
The performance is measured by MAP or
precision and recall curves for \textit{Hamming ranking} using 16 to
128 hash bits. The results  for \textit{hash lookup}
using a Hamming radius within 2 by F1 score \cite{F108}: $F_1 = 2
( precision \cdot recall) / (precision + recall)$ are also reported. Ground truths are
defined by the category information for the labeled datasets MNIST
and CIFAR-10, and by Euclidean neighbors for SIFT1M and GIST1M.

\begin{figure}
 \centering
 \includegraphics[width = 0.235\textwidth]{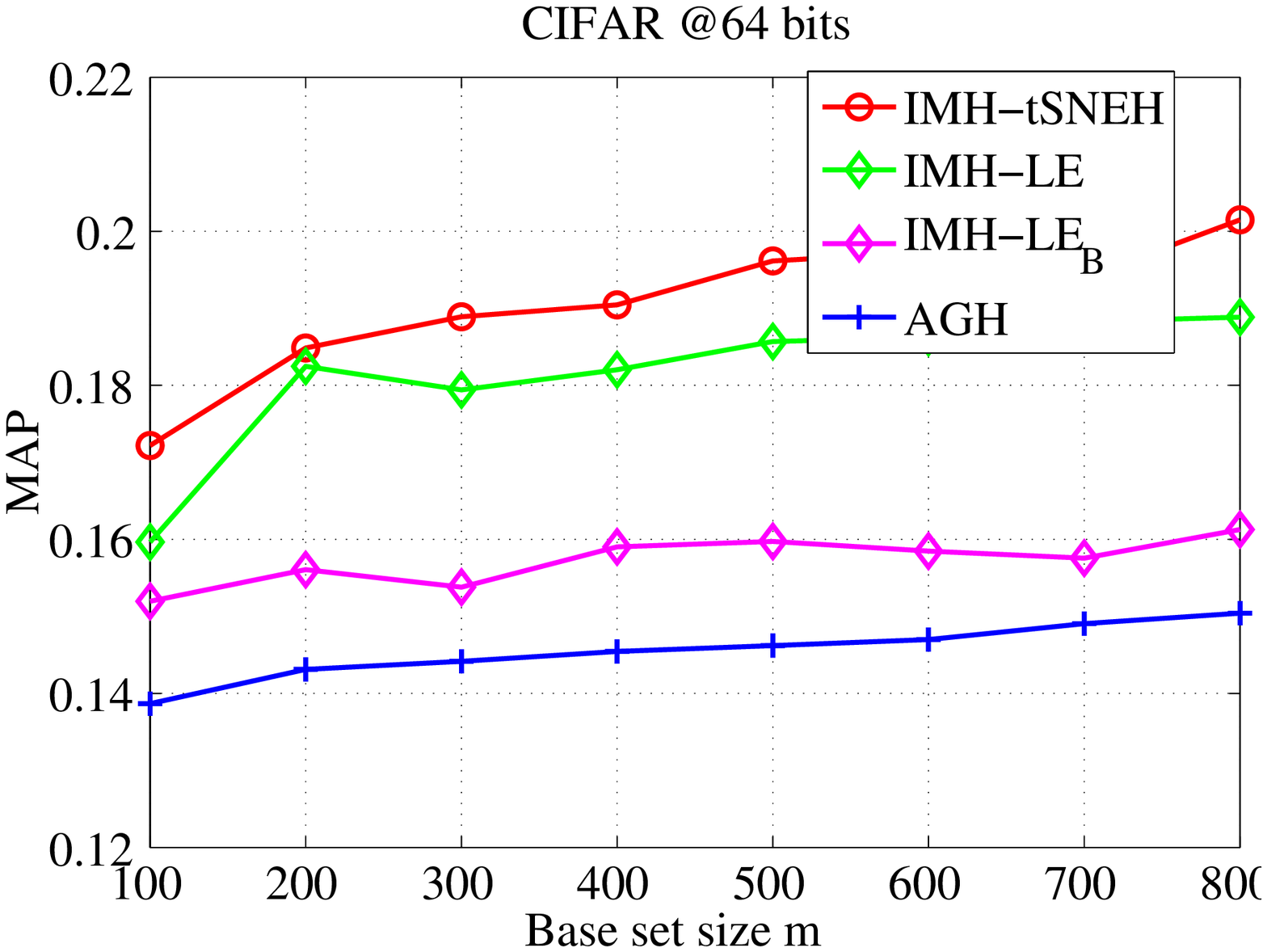}
 \includegraphics[width = 0.235\textwidth]{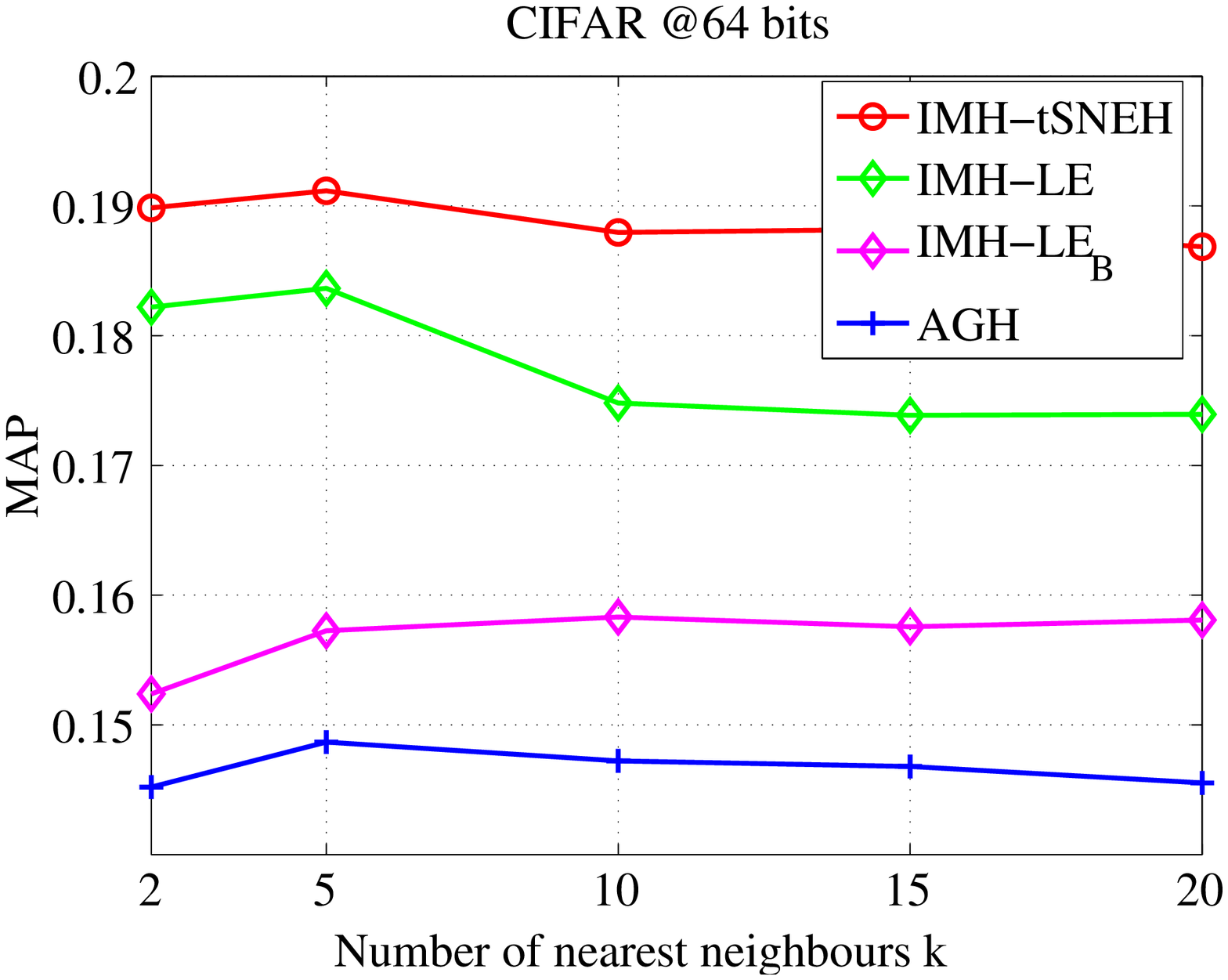}
\label{compare_anchor_num}
\caption{MAP results versus varying  base set size $m$ (left, fixing $k = 5$) and  number of nearest base points $k$ (right, fixing $m = 400$) for the proposed methods and AGH.
The comparison is conducted on the CIFAR-10 dataset using 64-bits .}
\label{Fig:compare_anchornum}
\end{figure}

\begin{table}[]
\caption{MAP (\%)  of different base generating methods: random sampling vs. K-means.
The comparison is performed on the CIFAR-10 dataset  with code lengths from 32 to 96 and base set size 400. Average results (standard deviation) are given based on 10 runs.}
\centering
\resizebox{0.5\textwidth}{!}{
\begin{tabular}{cc|cccc}
\hline\hline
Bits & Method & \IDH-LE$_\BB$ & \IDH-LE & \IDH-tSNE & AGH\\
 \hline
\multirow{3}{*}{32} &
 Random & 14.07 (0.27)  & 16.20 (0.19) & 17.26 (0.43) & -\\
 & K-medians &15.14 (0.15) & 17.08 (0.21) & 17.32 (0.27) &-\\
 &K-means & 16.05 (0.16)  & 17.48 (0.44) & {\bf 18.38 (0.36)} & 15.76 (0.17)\\
 \hline
 \multirow{3}{*}{64} &
 Random & 14.64 (0.17) & 16.98 (0.21)& 16.93 (0.49)& - \\
 & K-medians &15.43 (0.12) &17.11 (0.17) &17.62 (0.31) &-\\
 &K-means & 15.90 (0.11) & 18.20 (0.37)  & {\bf 19.04 (0.27)} & 14.55 (0.13)\\
 \hline
 \multirow{3}{*}{96} &
 Random & 14.76 (0.20)  &  17.02 (0.23)& 17.21(0.48)& -\\
   & K-medians &15.42 (0.19) &17.21 (0.28) &17.73 (0.28) &-\\
 &K-means & 15.46 (0.09) &  18.56 (0.34) & {\bf 19.41 (0.23)} & 13.98 (0.10)\\
 \hline\hline
\end{tabular}
}

\label{compare_random_kmeans}
\end{table}

\subsection{Base selection}
In this section,   the CIFAR-10 dataset is taken as an example to compare different base generation methods and different base sizes for the proposed methods. AGH is also evaluated here for comparison.
Table~\ref{compare_random_kmeans} compares three methods for generating base point sets: random sampling, K-medians and K-means on the training data.  One can easily see that the performance of the proposed methods using K-means is better at all code lengths than that using other two methods. It is also clear from Table~\ref{compare_random_kmeans} that the K-medians algorithm achieves better performance than random sampling, however still inferior to K-means.

Different from K-means that constructs each base point by averaging the data points in the corresponding cluster, random sampling and K-medians generate the base set with real data samples in the training set. However this property does not help K-medians obtain better hash codes than  K-means. This is possibly because K-means produces the cluster centers with minimum quantization distortions at each group of the data points. Another advantage of K-means is it is much more efficient than K-medians.

As  can also be seen, even with base set by random sampling, the proposed methods outperform AGH in all cases but one. Due to the superior results and high efficiency in practice, the base set is generated by K-means in the following experiments.

From Figure~\ref{Fig:compare_anchornum},
it is clear that the performance of the proposed methods is consistently improved with the increasing base set size $m$, which is consistent with the analysis of the prototype algorithm. One can observe that,
the performance does not change significantly with  the number of nearest base points $k$.
 It is also clear that \IDH-LE$_\BB$, which only enforces smoothness in the base set, does not perform as well as \IDH-LE, which also enforces smoothness between the base set and training set.

To further investigate the impact of the base set size $m$, in both performance and efficiency,  the proposed methods are performed with large variance of $m$ on  CIFAR-10 and MNIST.
The results for the proposed IMH-tSNE are shown in Table~\ref{Tab:EXT}. As can be clearly seen, for the dataset of CIFAR-10, the MAP score by IMH-tSNE improves consistently with the increasing base set size $m$ when $m \leq 400$, while does not change dramatically with larger $m$. On MNIST, the performance of IMH-tSNE is also consistently improved when $m$ increases. Same as on CIFAR-10, the MAP score does not significantly change with larger $m$.

In terms of computational efficiency, on both datasets, the training time cost is considerably increased with larger $m$. For example,   IMH-tSNE costs about 4.3 seconds with 400 base samples, while costs more than 21 seconds with 1,000 base samples on the CIFAR-10 dataset. For the retrieval task, the testing time is more crucial. As shown in Table~\ref{Tab:EXT}, the testing time is in general linearly increased with $m$. With a small $m$, the testing is very efficient. When with a large $m$ (\eg, $m \geq 3000$), IMH-tSNE needs more than one milliseconds to compute the binary codes of the query, which is not scalable for large-scale tasks. Take both performance and computational efficiency into account, for the remainder of this paper, the settings $m = 400$ and $k = 5$ are used for the proposed methods, unless otherwise specified.

 Figure~\ref{Fig:manifolds} shows the t-SNE embeddings of the base set (400 points generated by K-means) and the embeddings of the whole MNIST dataset computed by the proposed inductive method (without binarization). As can be seen from the right figure that most of the points  are close to their corresponding clusters  (with respect to 10 digits). This observation shows that, in the low-dimensional embedding space, the proposed method can well preserve the local manifold structure of the whole dataset based on a relatively small base set.

\begin{table*}
\centering
\caption{Impact of the base set size $m$ of IMH-tSNE on the retrieval performance and computational efficiency. The base set is generated by K-means. The results with 64 bits are reported, based on 10 independent runs. The experiments are conducted on a desktop PC with a 4-core 3.40GHZ CPU and 32G RAM.}
\resizebox{\textwidth}{!}{
\begin{tabular}{cl|lllllllllll}
\hline\hline
&$m$ & 100 &200 & 400  & 800 & 1000 & 1500 & 2000 &3000 & 4000 & 5000&10000\\
\hline
\multirow{3}{*}{CIFAR}&MAP (\%) & 17.55&  18.19&    19.52   &    19.99&    20.11 &   20.13 &   20.05 & 20.23&    20.37   &20.45&20.85\\
&Training Time (s) & 0.7754 & 1.3913  &  4.2850   & 13.5576 &  21.3487  & 53.5668 &  97.6969  &229.6100&  419.4027 & 664.0867&2701.8001\\
&Testing Time (ms)& 0.0057 & 0.0095  &  0.0157&      0.0254 &   0.0308   & 0.0475  &  0.0621 & 0.1045&    0.1388 &   0.1981&0.3634\\
\hline
\multirow{3}{*}{MNIST}& MAP (\%) &57.01&  67.72   & 77.78  &   79.71 &   80.40  &  82.52 &   82.10&  86.47  &  85.66   & 87.37 &87.74\\
&Training Time (s) & 0.9546 & 1.6804   & 4.6302 &   814.7084 &  22.7297  & 54.7796 & 102.1419 & 232.8753  &423.6395&  669.6443 &2707.8012\\
&Testing Time (ms) & 0.0074 & 0.0122  &  0.0182    &  0.0330   & 0.0394   & 0.0579   & 0.0984  &0.1222 &   0.2032 &   0.2293&0.4357\\
\hline\hline
\end{tabular}
}
\label{Tab:EXT}
\end{table*}

\begin{figure}
\includegraphics[width=0.24\textwidth]{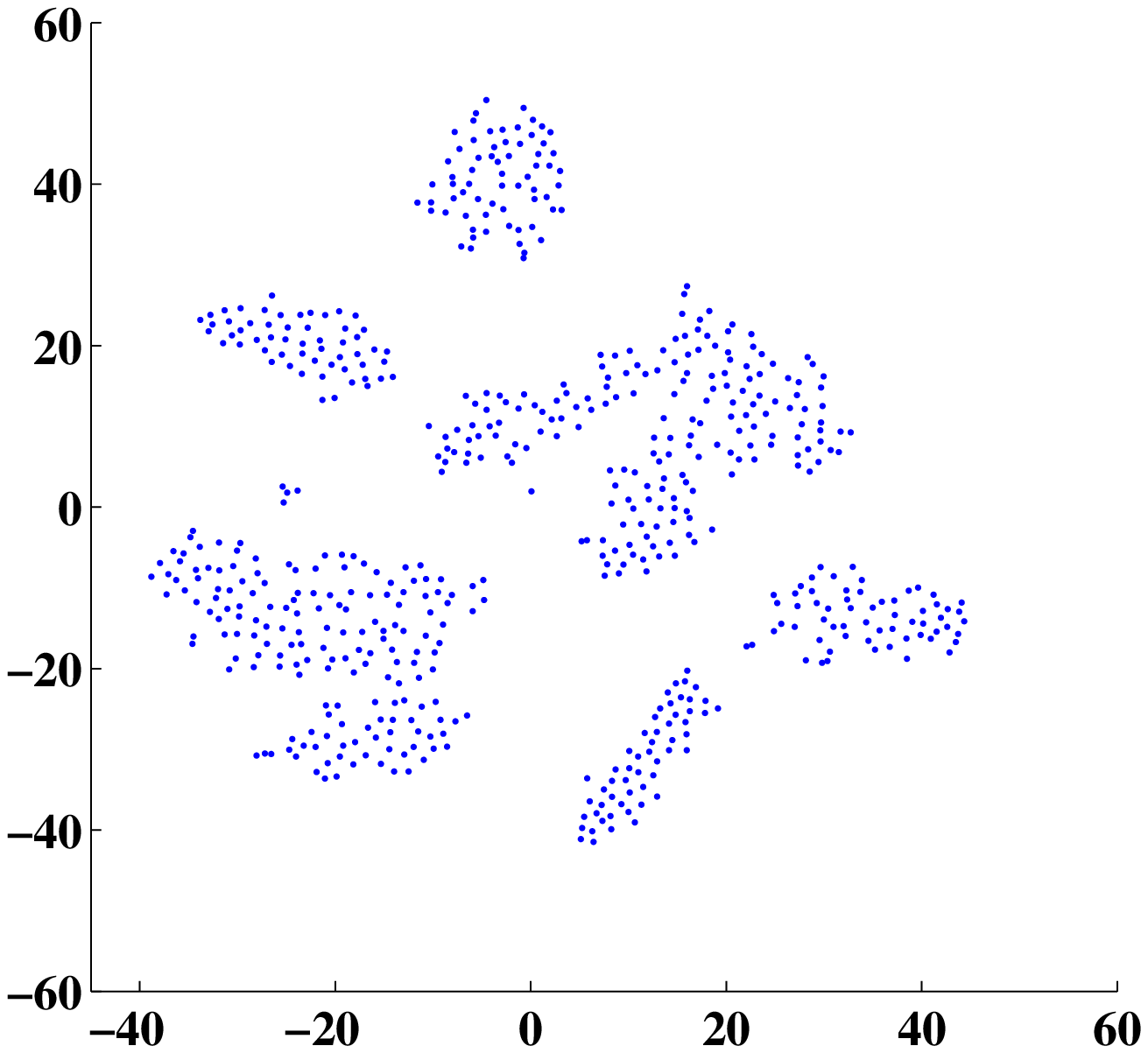}
\includegraphics[width=0.24\textwidth]{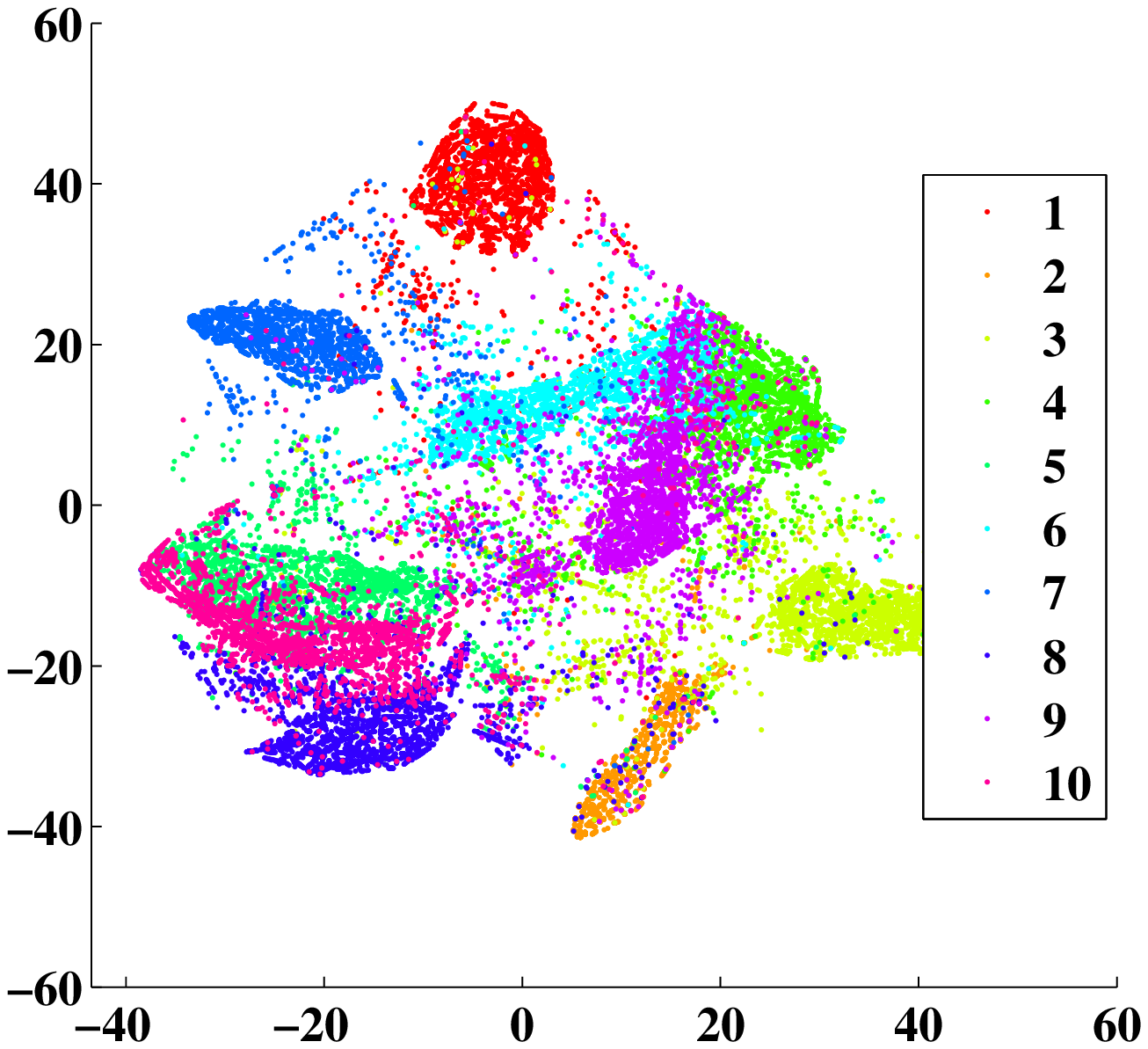}
\caption{Visualization of the digits in MNIST in 2D by t-SNE: (left) t-SNE embeddings of the base set (400 data points generated by K-means); (right) embeddings of all the 60,000 samples computed by the proposed inductive method. In the right, the labels of the digits are indicated with 10 different colors.}
\label{Fig:manifolds}
\end{figure}

\subsection{Results on CIFAR-10 dataset}
The comparative results  based on  MAP   for  Hamming ranking  with code lengths from 16 to 128
bits
are reported in Figure~\ref{cifar}.
It can be seen that the proposed \IDH-LE and \IDH-tSNE perform best in all cases.
Among the proposed algorithms, the LE based \IDH-LE is inferior to the t-SNE based \IDH-tSNE.
 \IDH-LE is still much better than AGH and STH, however. ITQ performs better than SpH and BRE on this dataset, but is still inferior to \IDH.
  SH and PCAH perform worst in this case. This is because SH
relies upon its uniform data
assumption while PCAH simply generates the hash hyperplanes by PCA directions, which does not explicitly capture the similarity information. The results are consistent with the complete precision and recall curves shown in the supplementary material.
The $F_1$ results for hash lookup with Hamming radius 2 are also reported.
It is can be seen that \IDH-LE and \IDH-tSNE also outperform all other methods by large margins.  BRE and AGH obtain  better results than the remaining methods, although
the performance of all methods drop as code length grows.

Figure~\ref{cifar_pr} shows the  precision and recall curves of Hamming ranking for the compared methods. STH and AGH obtain relatively high precisions when a small number of samples are returned, however precision drops significantly as the number of retrieved samples increases. In contrast,
 \IDH-tSNE, \IDH-LE and ITQ achieve  higher precisions with relatively larger numbers of retrieved points.

Figure~\ref{Fig:returned_images} shows the qualitative results of \IDH and related methods on some sample queries.
As can be seen, \IDH-tSNE achieves the best search quality in term of visual relevance.

\begin{figure}[t]
\centering
\includegraphics[width = 0.235\textwidth]{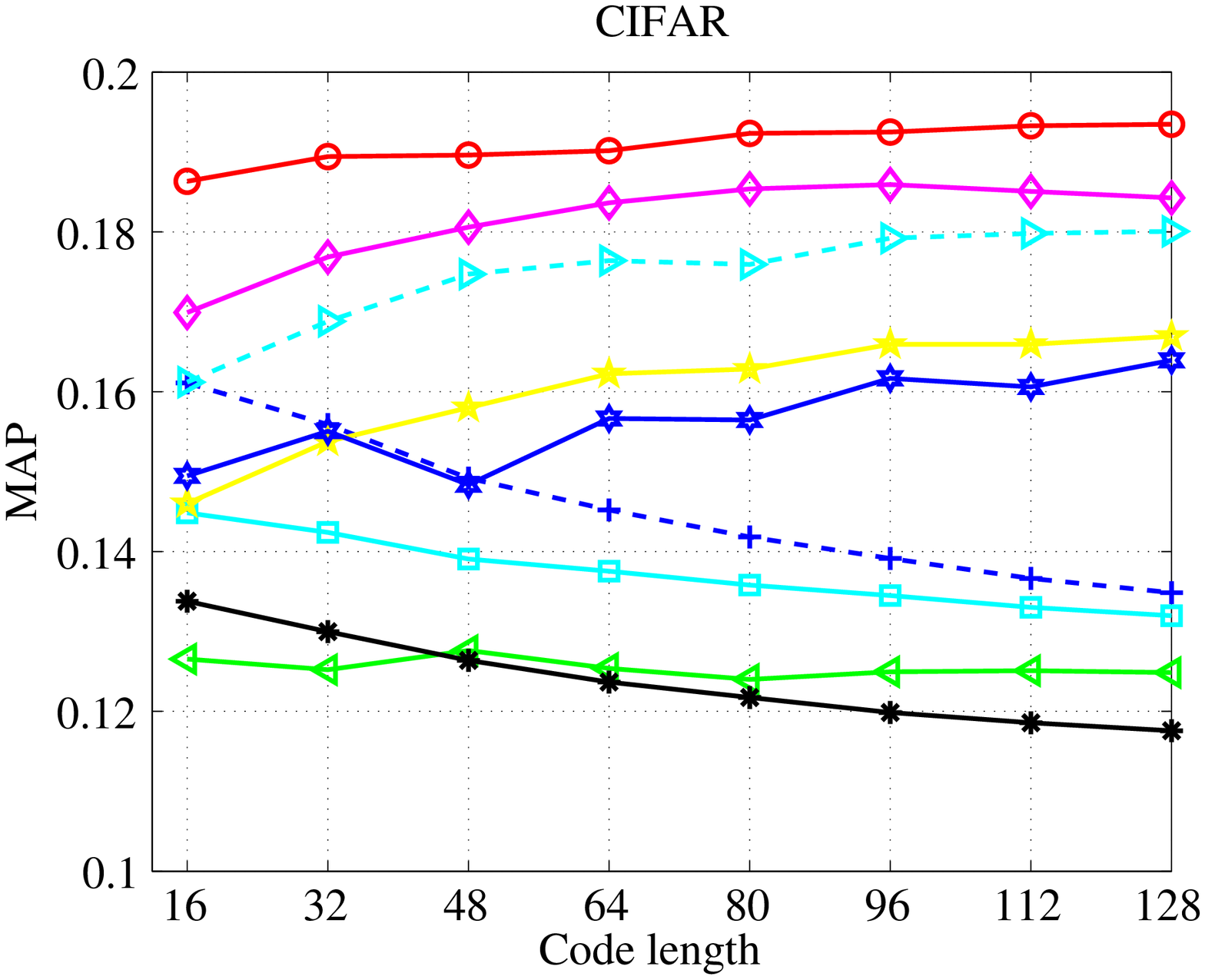}
\includegraphics[width = 0.235\textwidth]{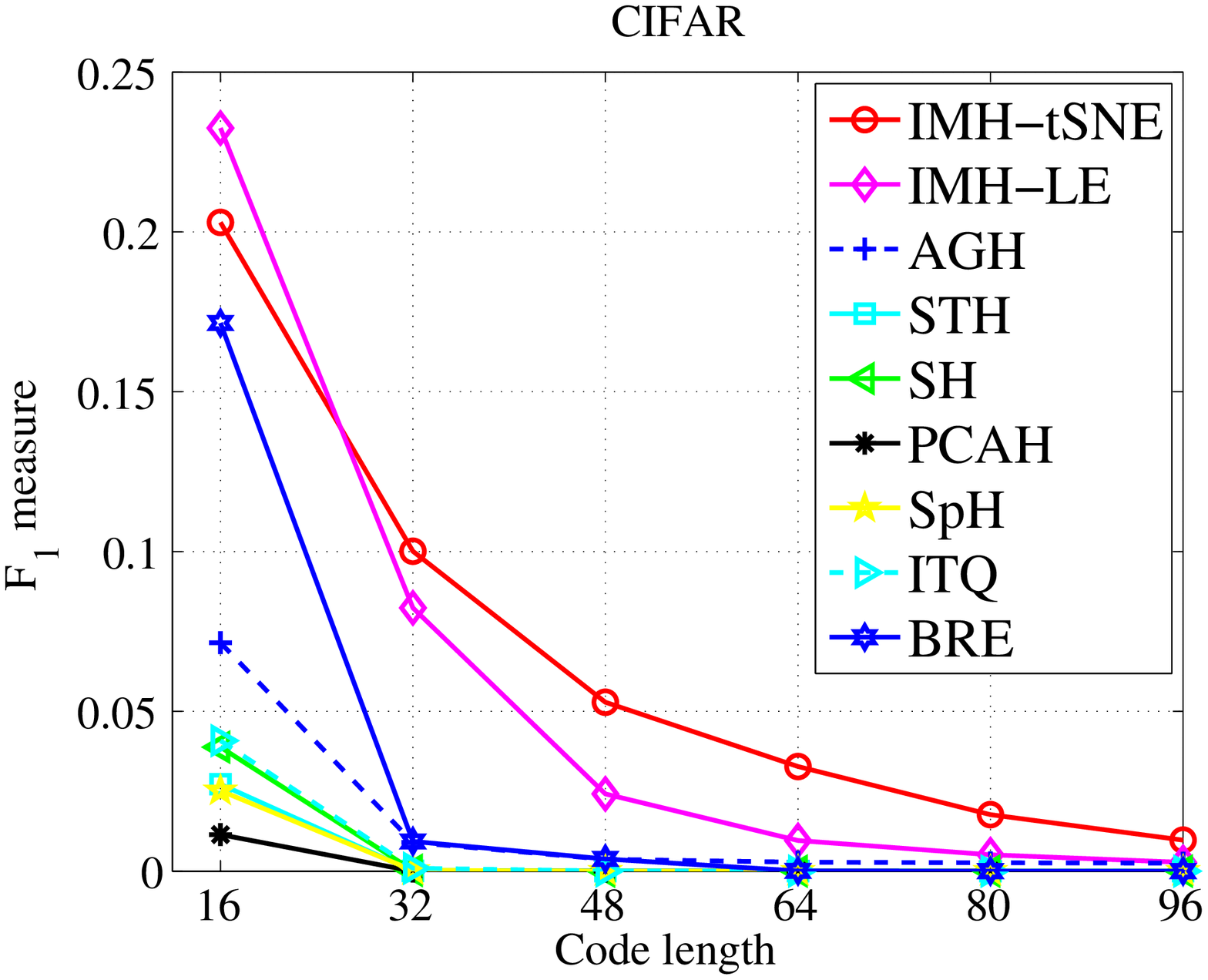}
\caption{Comparison of different methods on CIFAR-10 based on MAP (left) and $F_1$  (right) for varying code lengths.
}
\label{cifar}
\end{figure}

\begin{figure}
\centering
\includegraphics[width = 0.235\textwidth]{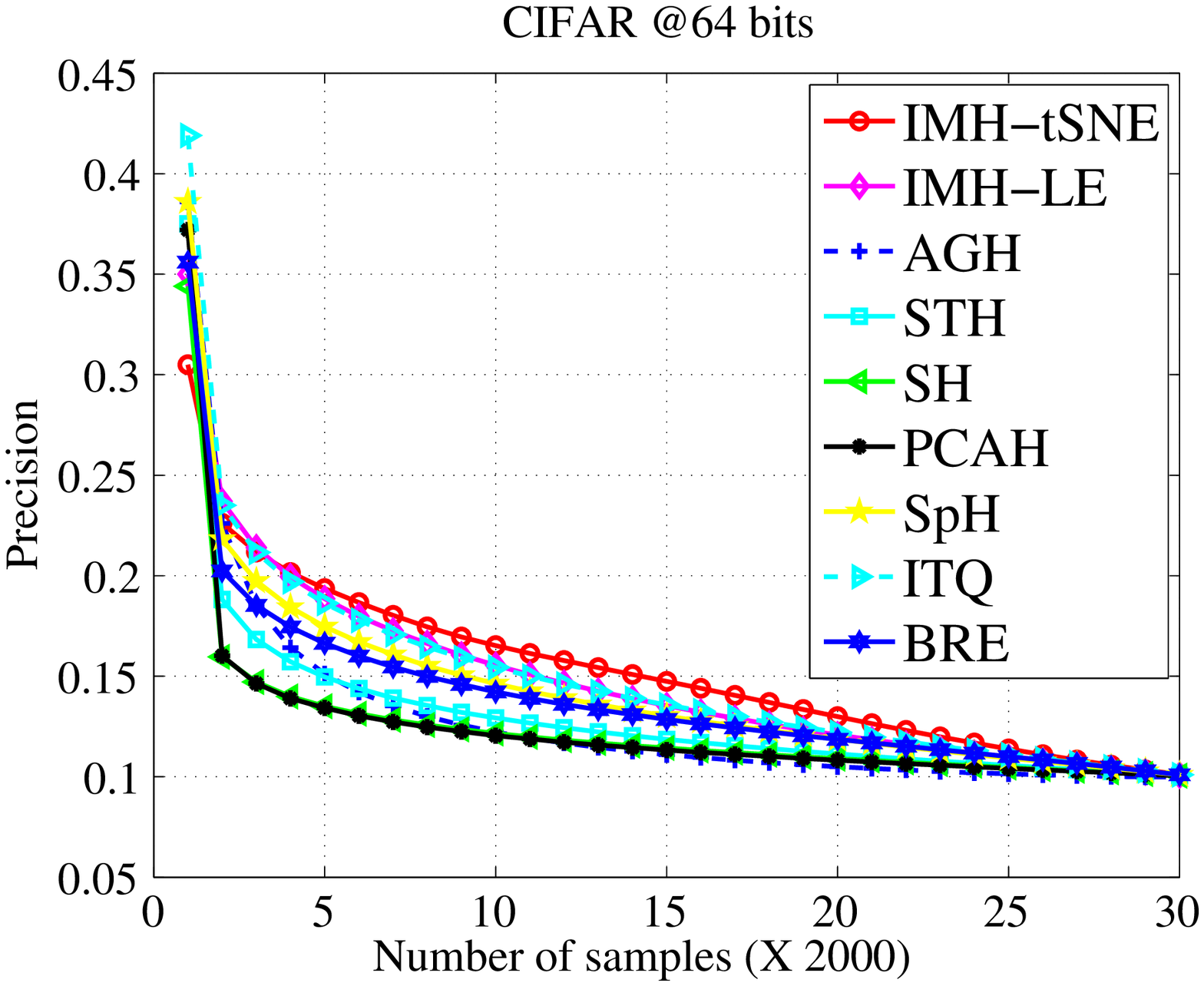}
\includegraphics[width = 0.235\textwidth]{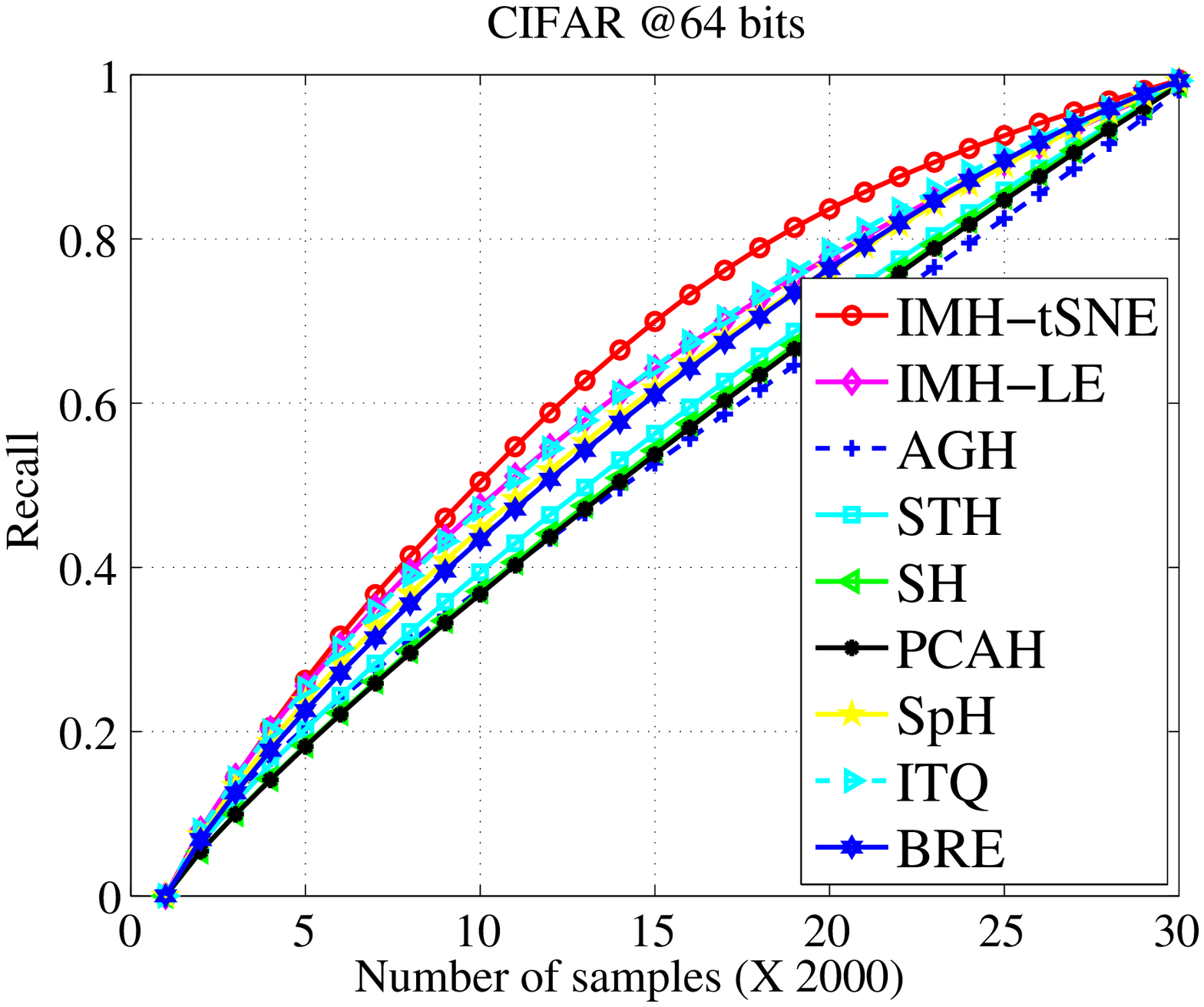}
\caption{Comparison of different methods on CIFAR-10 based on precision (left) and recall (right) using 64-bits.
Please refer to the complementary for complete results for other code lengths.}
\label{cifar_pr}
\end{figure}

\begin{figure*}
\centering
\includegraphics[width = \textwidth]{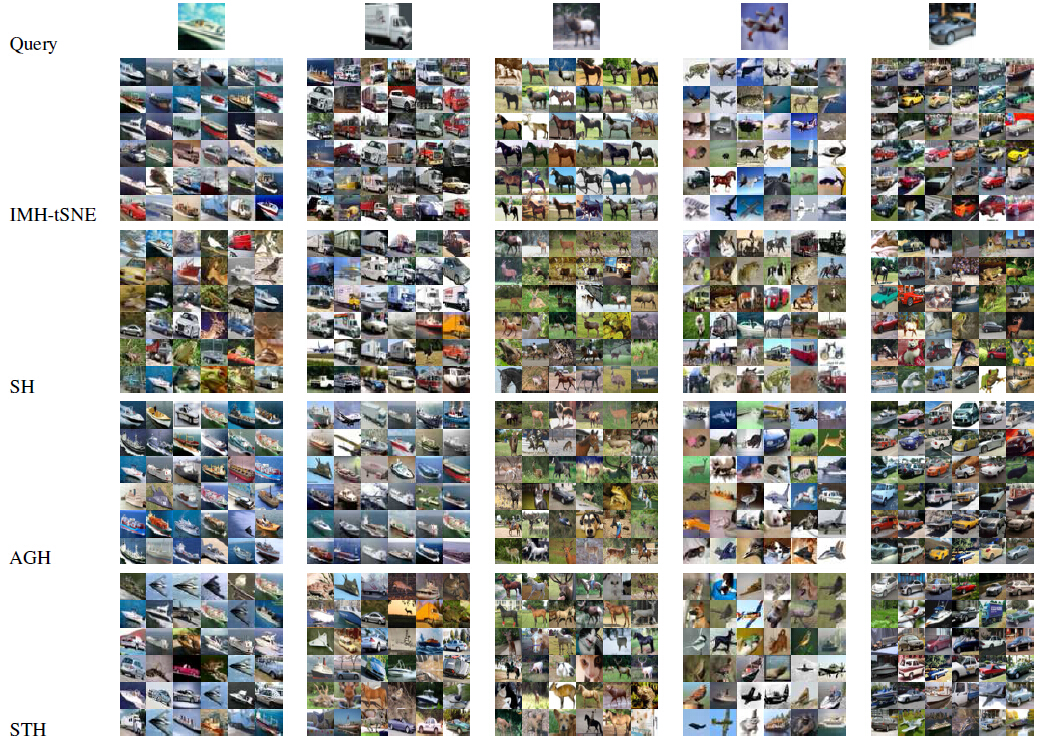}
\caption{The query images (top row) and the retrieved images by various methods with 32 hash bits. }
\label{Fig:returned_images}
\end{figure*}

\begin{figure}[t]
\centering
\includegraphics[width = 0.235\textwidth]{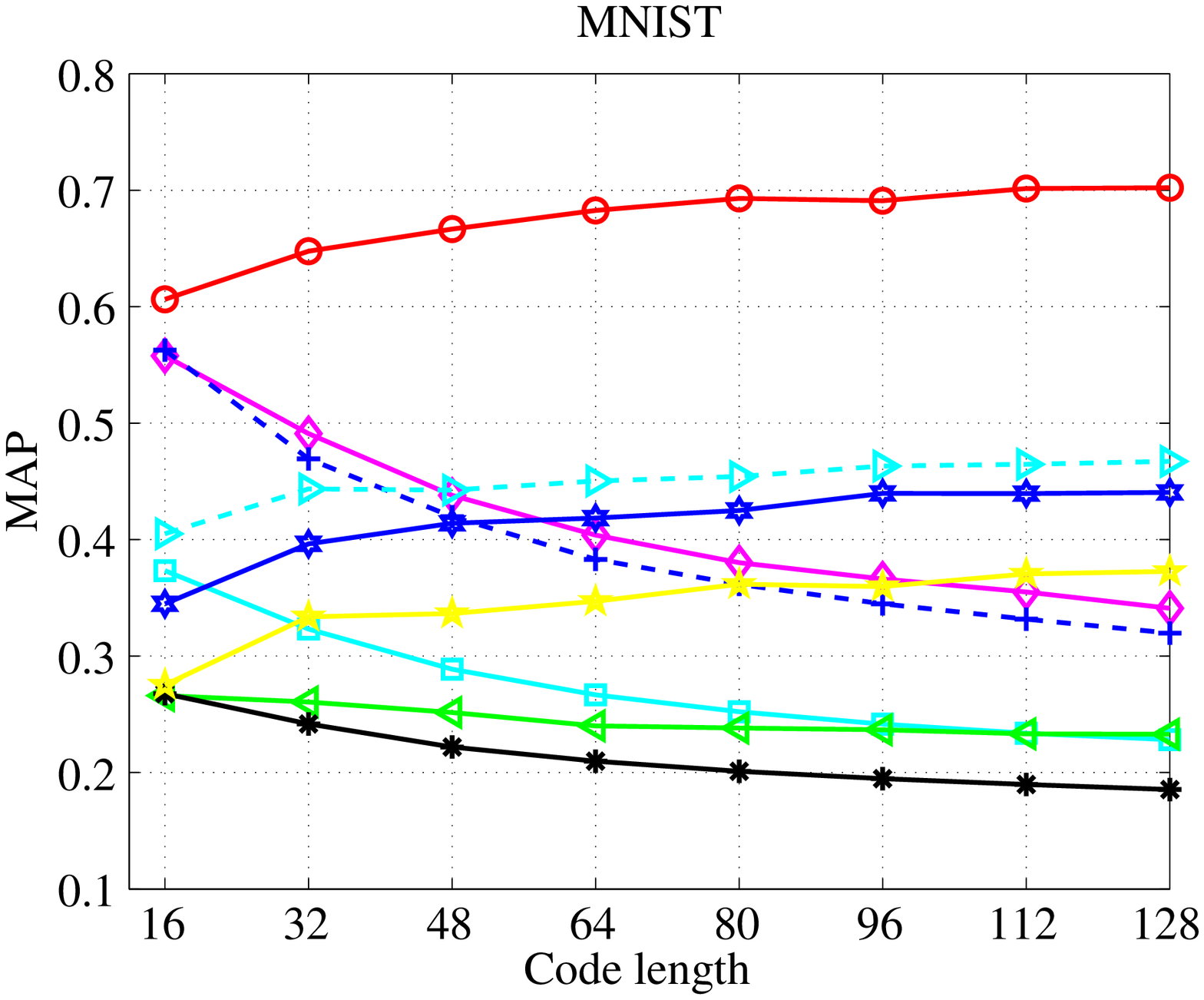}
\includegraphics[width = 0.235\textwidth]{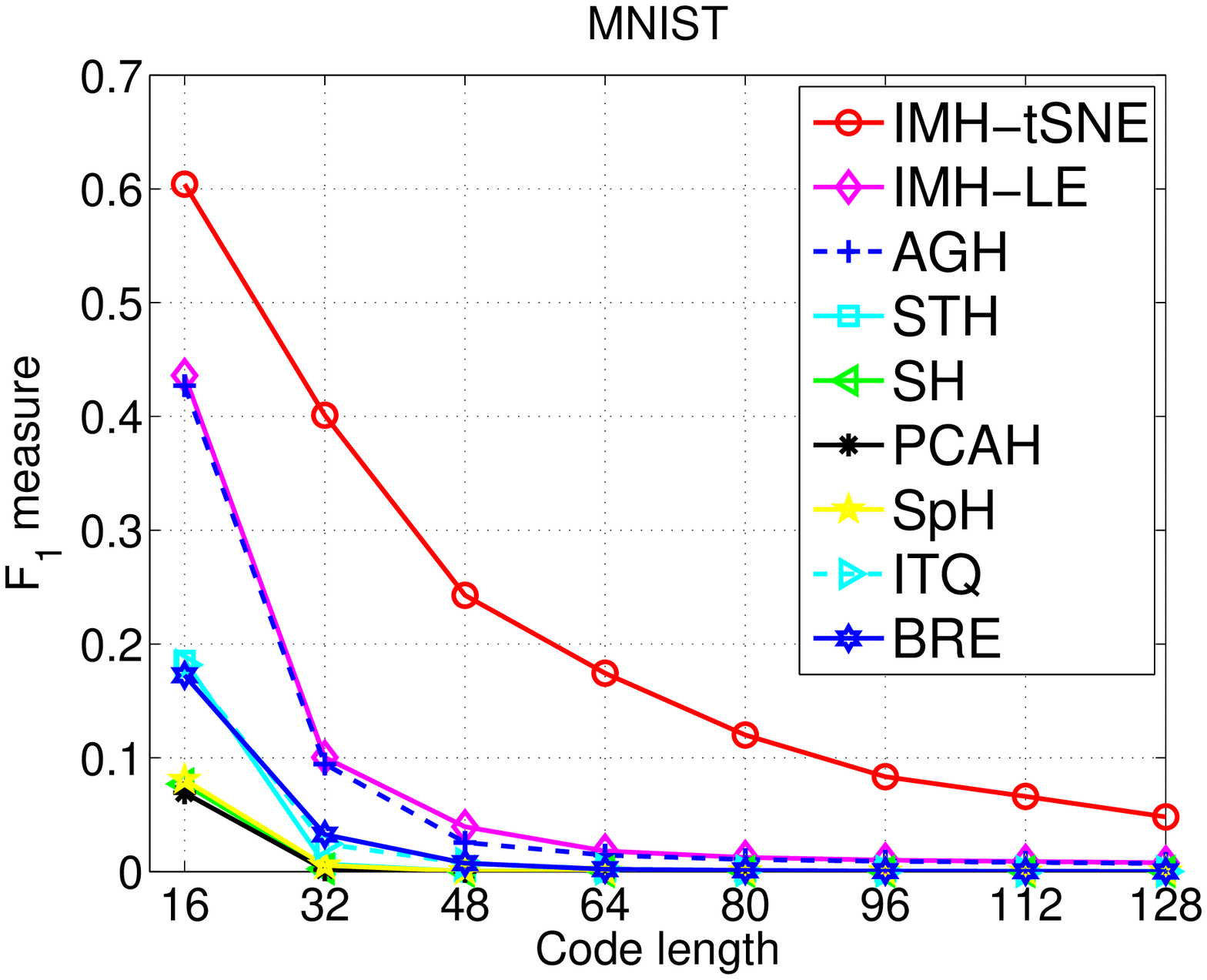}
\caption{Comparison of different methods on the MNIST dataset using MAP (left) and $F_1$  (right) for varying code lengths.
}
\label{mnist}
\end{figure}

\begin{table}[]
\caption{Comparison of training and testing times (in seconds) on MNIST with 70K 784D feature points. K-means dominates the cost of  AGH and \IDH (8.9 seconds), which can be conducted in advance in practice. The experiments are based on a desktop PC with a 4-core 3.07GHZ CPU and 8G RAM.}
\centering
\begin{tabular}{ccc|cc}
\hline\hline
Method & \multicolumn{2}{c|}{Train time} & \multicolumn{2}{c}{ Test time}\\
&64-bits  & 128-bits  & 64-bits  & 128-bits \\
\hline
\IDH-LE&9.9 & 9.9& $5.1 \times 10^{-5}$& $3.8 \times 10^{-5}$ \\
\IDH-tSNE&16.7&20.2 &$2.8 \times 10^{-5}$ & $3.1 \times 10^{-5}$\\
SH&6.8 & 16.2&$5.8 \times 10^{-5}$ &$1.8 \times 10^{-4}$  \\
STH&266.1& 485.4& $1.8\times 10^{-3}$& $3.6 \times 10^{-3}$ \\
AGH&9.5 & 9.5&$4.7 \times 10^{-5}$ &$5.5 \times 10^{-5}$ \\
PCAH&3.8 &4.1 &$5.7 \times 10^{-6}$ &$1.2 \times 10^{-5}$\\
SpH & 19.7&41.0 &$1.3 \times 10^{-5}$ & $2.0 \times 10^{-5}$\\
ITQ &10.4 &20.3 &$6.9 \times 10^{-6}$ & $1.1 \times 10^{-5}$\\
BRE &418.9 &1731.9 &$1.2\times 10^{-5}$ &$2.4 \times 10^{-5}$ \\
\hline\hline
\end{tabular}
\label{Tab:time_mnist}
\end{table}
\subsection{Results on MNIST dataset}
The MAP and $F_1$ scores for these compared methods are reported in Figure~\ref{mnist}. As in Figure~\ref{cifar}, \IDH-tSNE achieves the best results. It is clear  that, on this dataset \IDH-tSNE outperforms \IDH-LE by a large margin, which increases as code length increases.
This further demonstrates the advantage of t-SNE as a tool for hashing by embedding high dimensional data into a low dimensional space. The dimensionality reduction procedure  not only preserves the local neighborhood structure, but also reveals important global structure (such as clusters) \cite{tSNE2008}. Among the four LE-based methods, while \IDH-LE shows a small advantage over AGH, both methods achieve much better results than STH and SH. ITQ and BRE obtain high MAPs with longer bit
lengths,
but they still perform less well for the hash look up $F_1$.
 PCAH  performs worst  in terms of both MAP and the $F_1$ measure.  Refer to the supplementary material for the complete precision and recall curves which validate the observations here.

\textit{\textbf{Efficiency}} %
Table~\ref{Tab:time_mnist} shows training and testing time on the MNIST dataset for various methods, and shows that
the linear method, PCAH, is fastest.
 \IDH-tSNE is slower than \IDH-LE, AGH and SH in terms of training time, however all of these methods have relatively low execution times and are much faster than STH and BRE. In terms of test time, both \IDH algorithms are comparable to other methods,  except STH which takes much more time to predict the binary codes by SVM on this non-sparse dataset.

\begin{figure}[]
\centering
\includegraphics[width = 0.235\textwidth]
{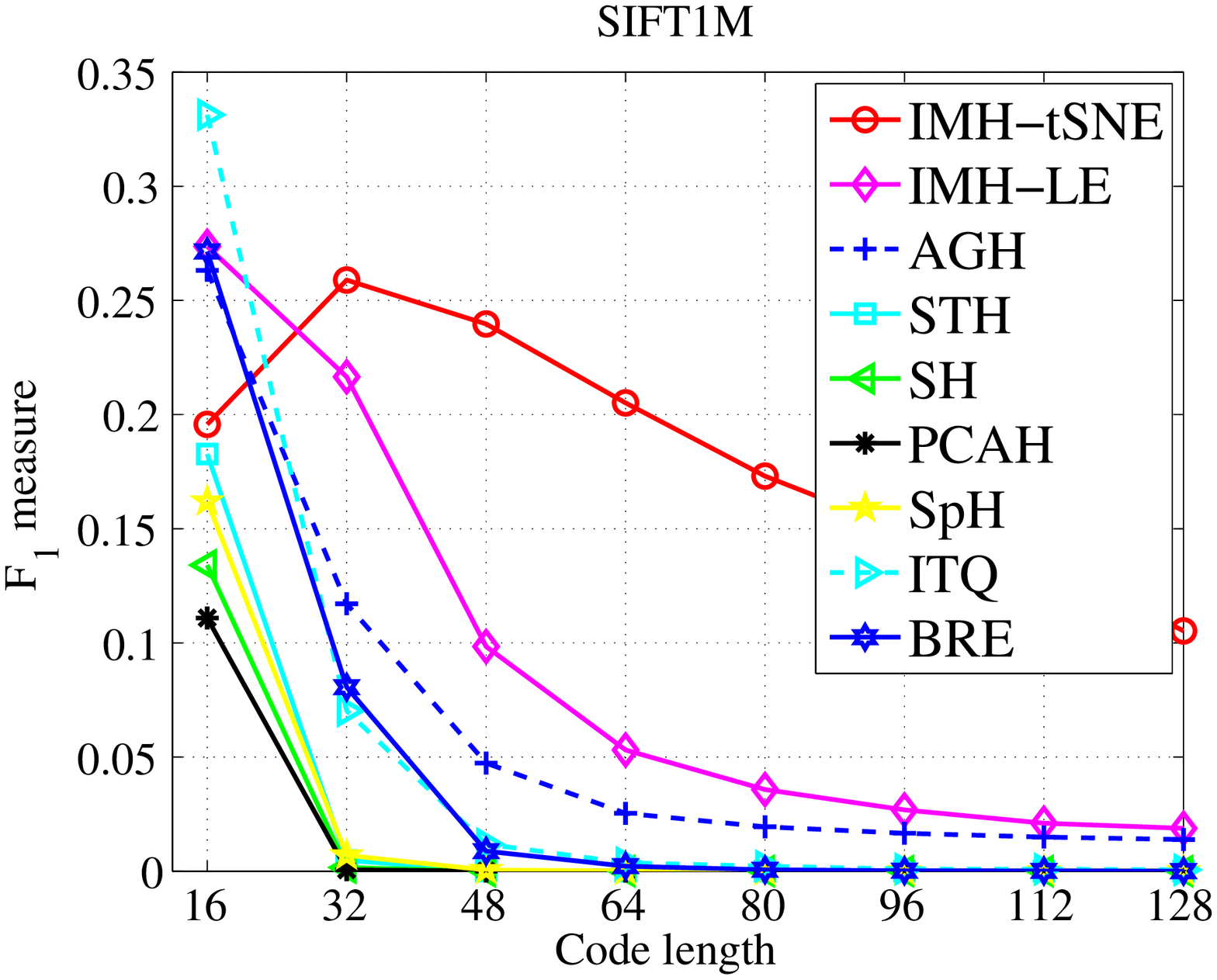}
\includegraphics[width = 0.235\textwidth]
{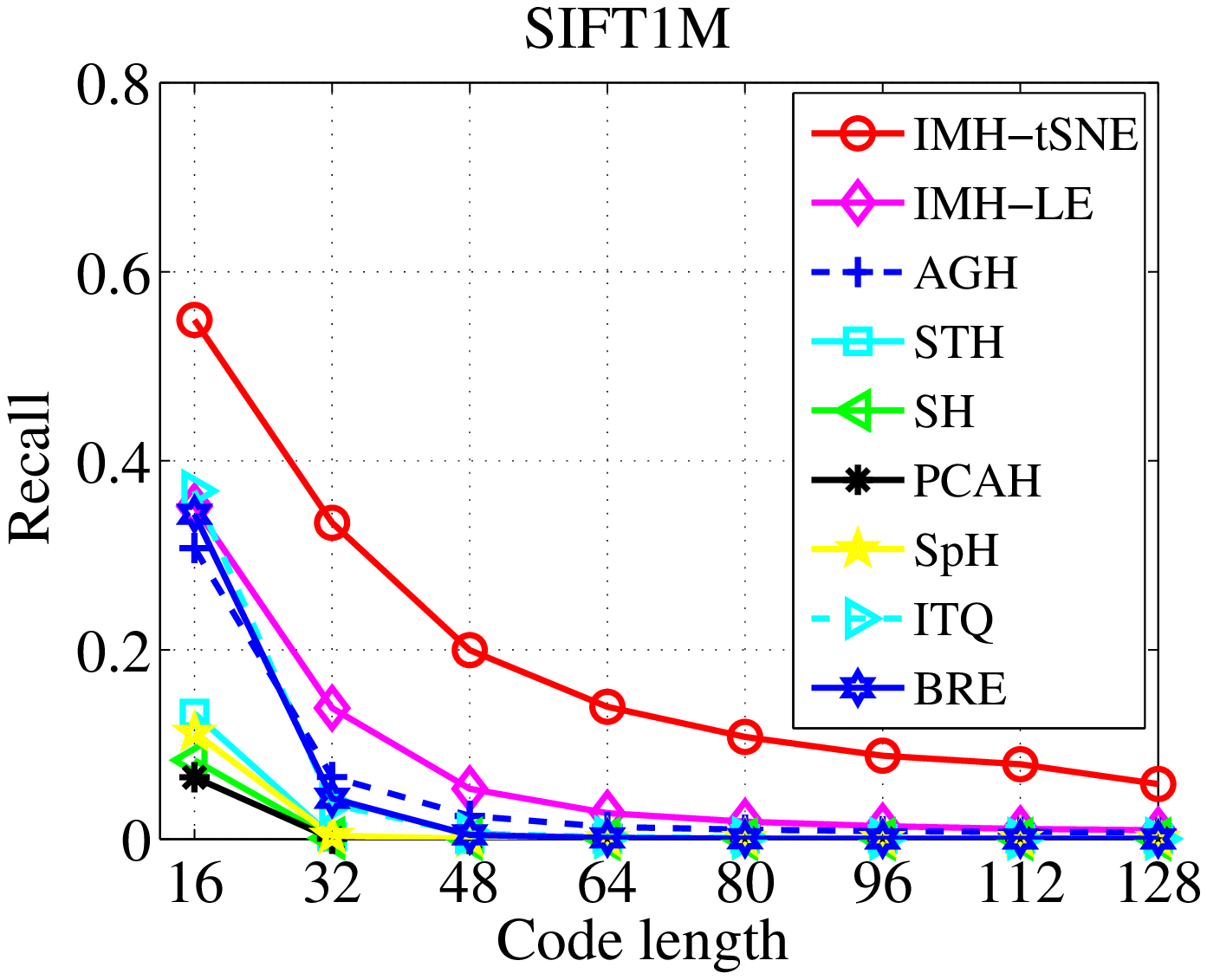}
\caption{Comparative results results on SIFT1M for $F_1$ (left) and recall (right) with Hamming radius 2. Ground truth is defined to be the closest 2 percent of points as measured by the Euclidean distance.}
\label{sift1M}
\end{figure}

\subsection{Results on SIFT1M and GIST1M}
SIFT1M  contains one million local SIFT descriptors extracted from a large set of images \cite{SSH2012}, each of which is represented by a 128D vector of histograms of gradient orientations.
GIST1M contains one million GIST features and
each feature is represented by a 960D vector.
 For both of these datasets, one million samples are used as training set and additional 10K are used for testing. As in \cite{SSH2012}, ground truth is defined as the
closest 2 percent of points as measured by the
Euclidean distance. For these two large datasets,  $1,000$ points are generated by K-means and $k$ is set as 2 for both \IDH and AGH.
The comparative results on SIFT1M and GIST1M are summarized in Figure~\ref{sift1M} and Figure~\ref{GIST1M}, respectively.
Again, \IDH consistently achieves superior
results in terms of both $F_1$ score and recall with Hamming radius 2. As can be seen that, the performance of most of these methods decreases dramatically with increasing code length as the Hamming spaces become more sparse, which makes the hash lookup fail more often. However \IDH-tSNE still achieves relatively high scores with large code lengths. If one looks at Figure~\ref{sift1M} (left), ITQ obtains the highest $F_1$ with 16-bits, however it decreases to near zero at 64-bits. In contrast, \IDH-tSNE still manages an $F_1$ of $0.2$.  Similar results are observed in the recall curves.

\begin{figure}[t]
\centering

\includegraphics[width = 0.235\textwidth]
{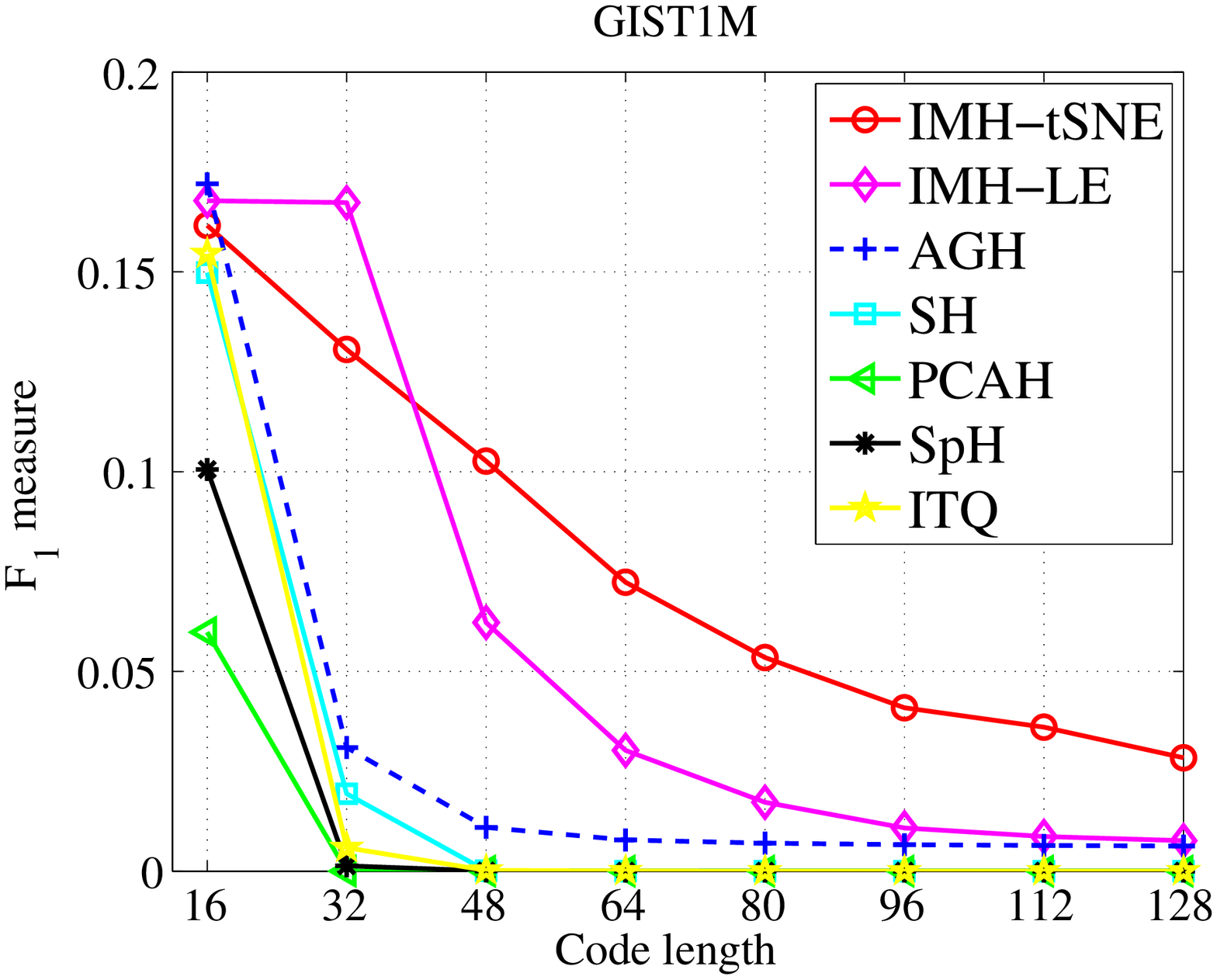}
\includegraphics[width = 0.235\textwidth]
{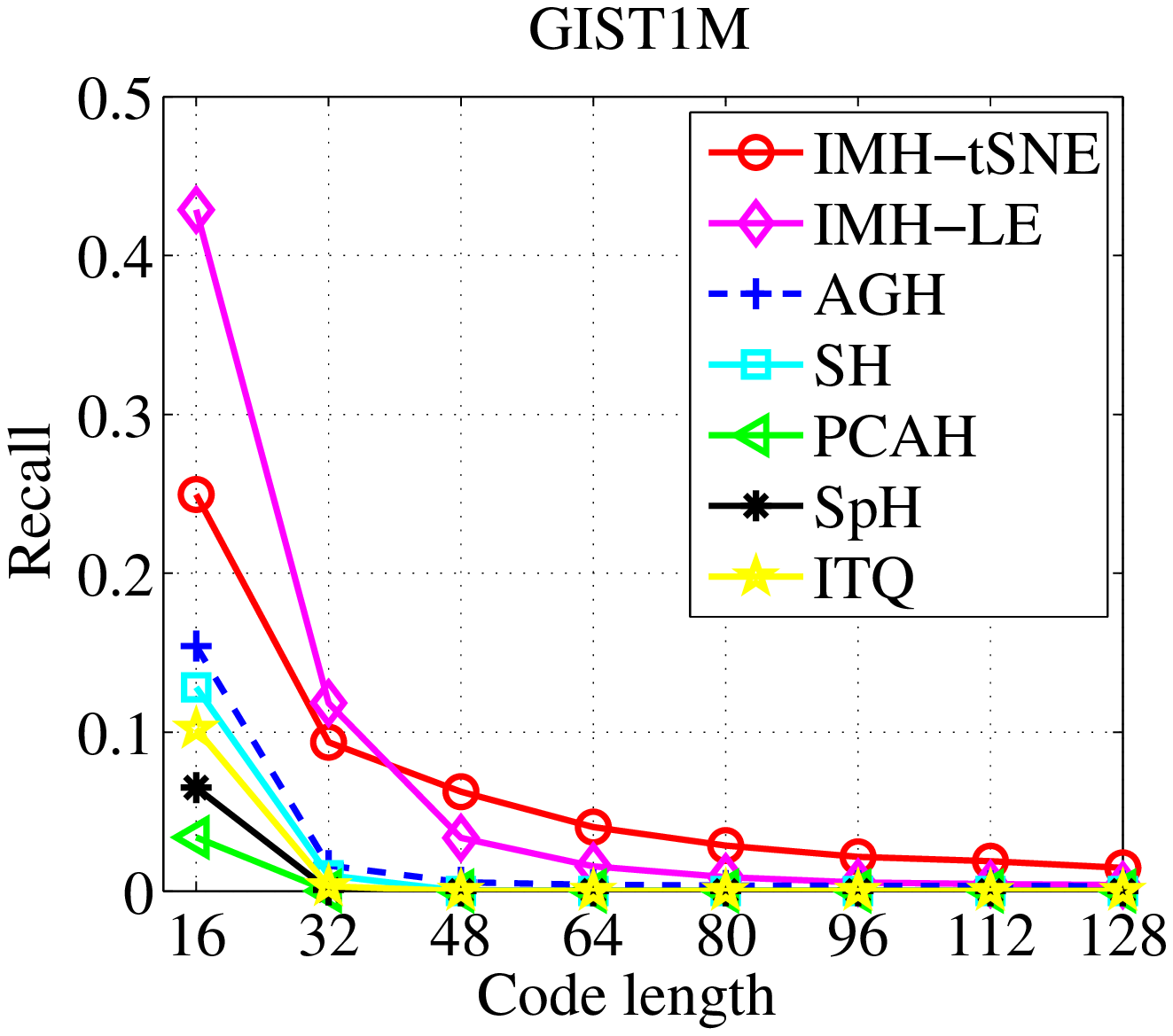}
\caption{Comparative results results on GIST1M by  $F_1$ (left) and recall (right) with Hamming radius 2. Ground truth is defined to be the closest 2 percent of points as measured by the Euclidean distance.}
\label{GIST1M}
\end{figure}

\begin{figure}
\centering
\includegraphics[width = 0.48\textwidth]{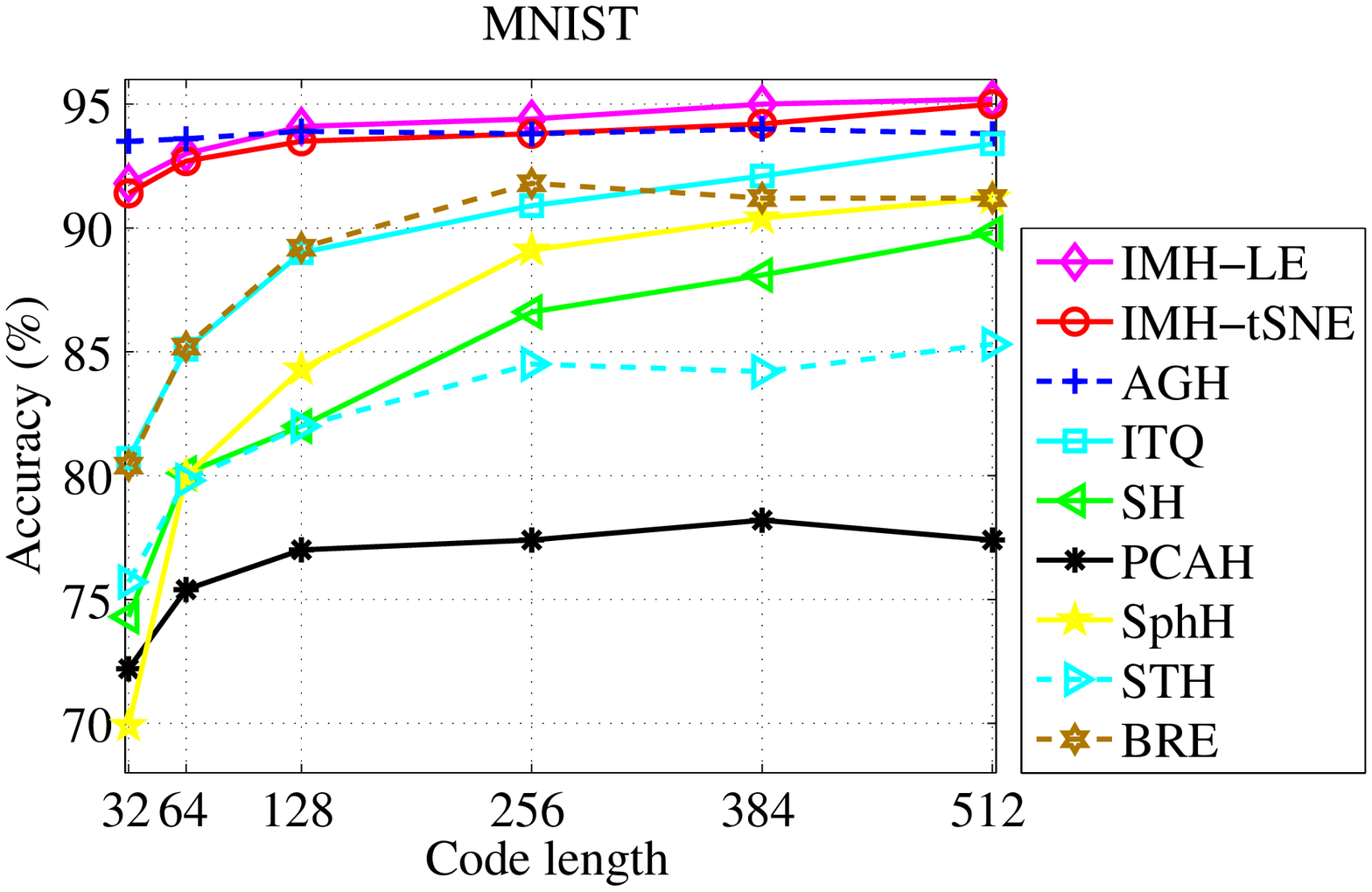}
\caption{Classification accuracy (\%) on MNIST with binary codes of various hashing methods by linear SVM.}
\label{calssification_mnist}
\end{figure}

\subsection{Classification on binary codes}
In order to demonstrate classification performance
a linear SVM is trained on the binary codes generate by \IDH for the MNIST data set.
In order to learn codes with higher bit
lengths
  for \IDH and AGH,  the base set size is set to $1,000$.
Accuracies of different binary
encodings
are shown in Figure~\ref{calssification_mnist}. Both \IDH and AGH achieve high accuracies on this dataset, although \IDH performs better with higher code
lengths.
In contrast, the best results of all other methods, obtained by ITQ,
are consistently worse than those for \IDH, especially for short code
lengths. Note that even with only 128-bit binary features \IDH obtains
a high 94.1\%. Interestingly,  the same classification rate of
94.1\% is obtained by applying the linear SVM to the uncompressed 784D features, which occupy several hundreds times as much space as the learned hash codes.

\begin{figure*}[]
\centering
\includegraphics[height = 5cm]{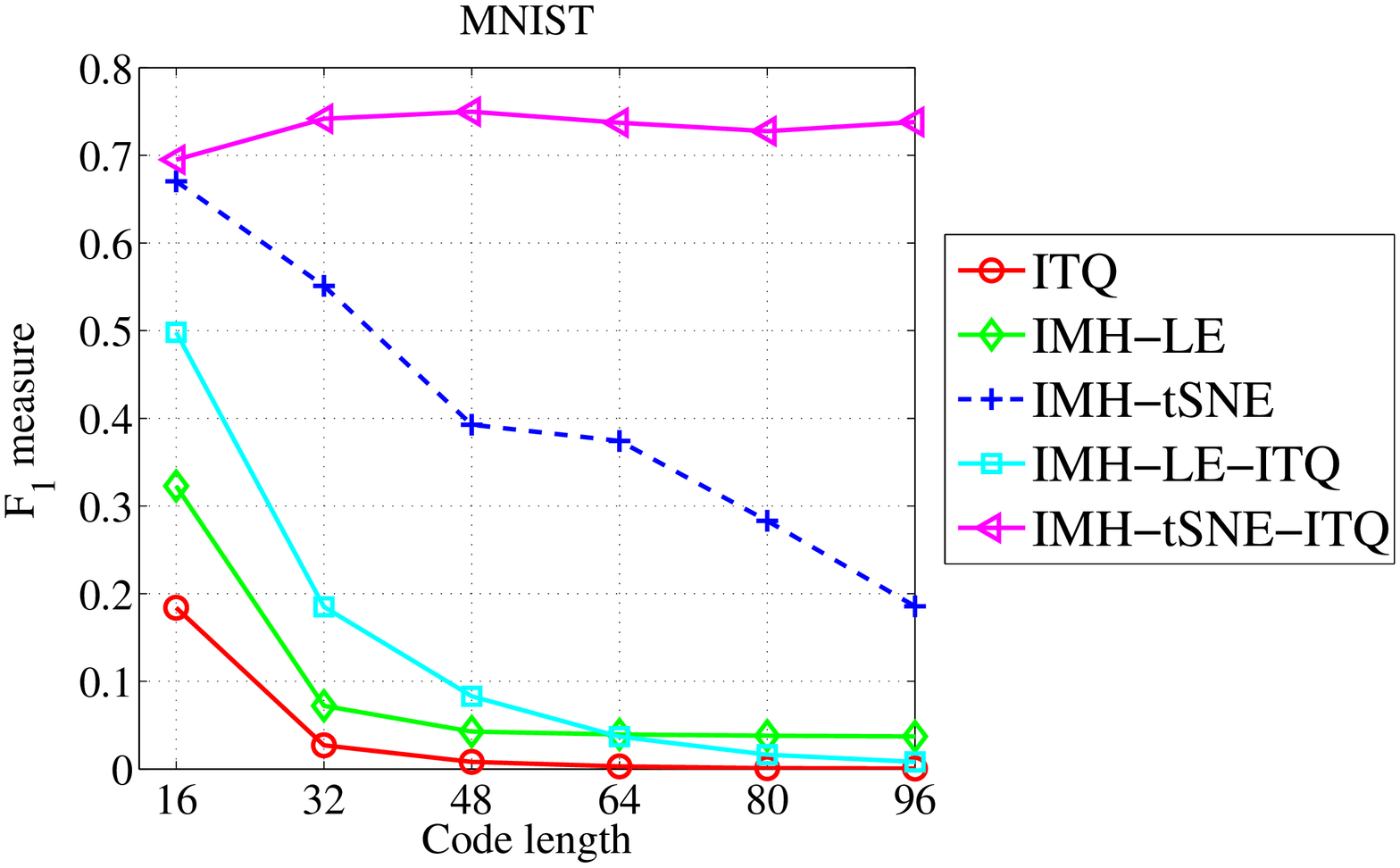}
\includegraphics[height = 5cm]{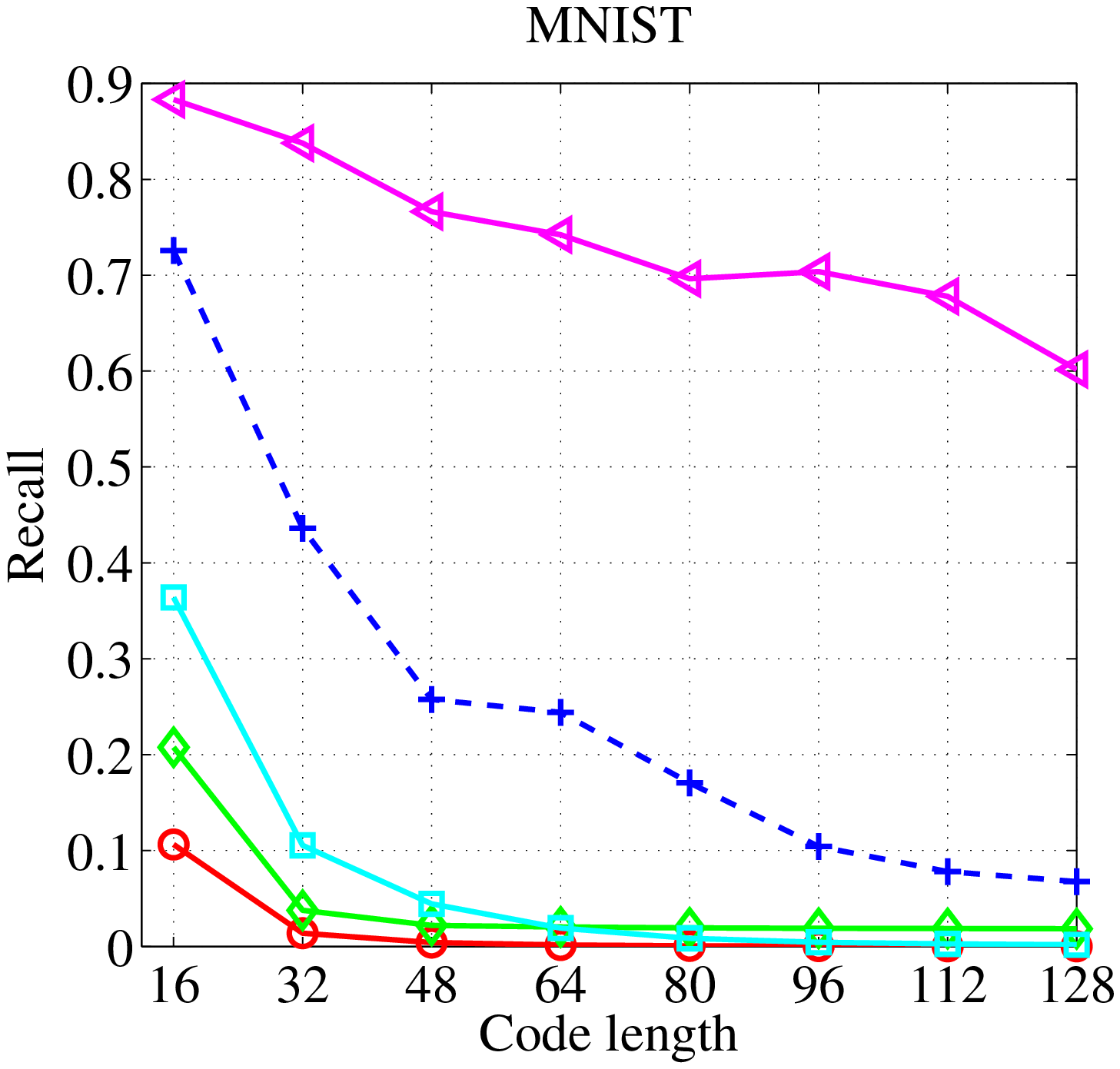}
\includegraphics[height = 5cm]{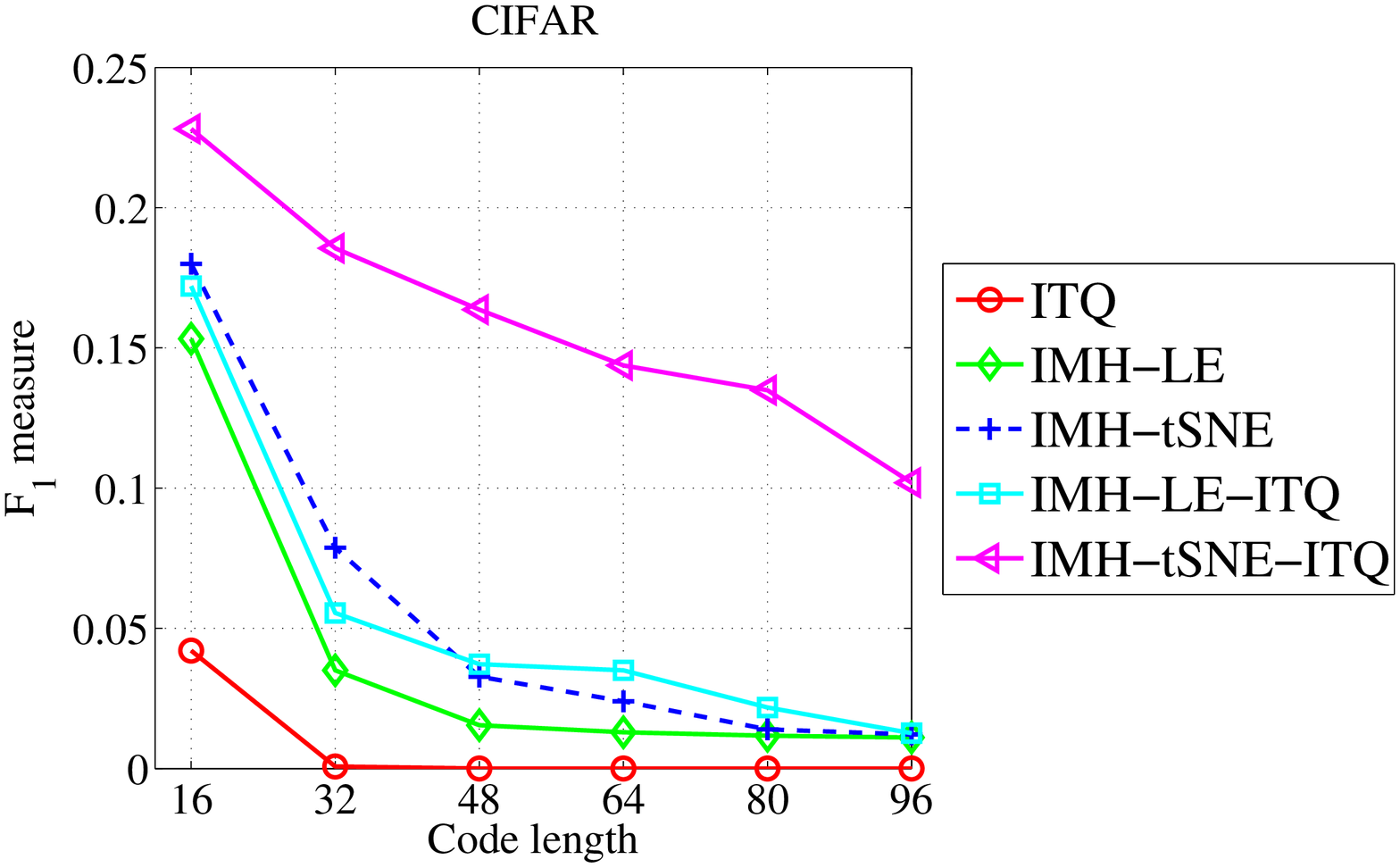}
\includegraphics[height = 5cm]{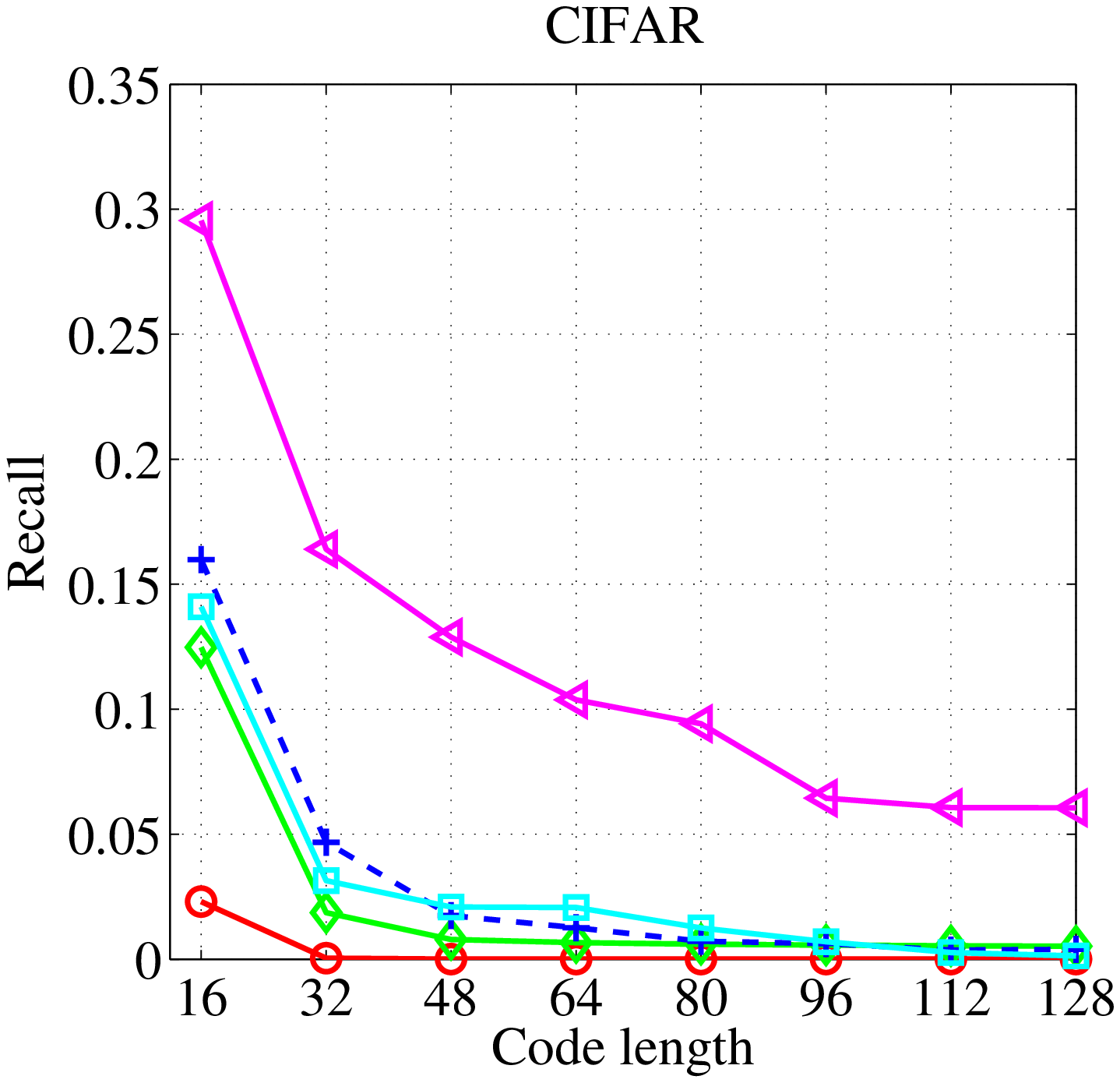}
\caption{Evaluation of IMH with ITQ rotations on the MNIST and CIFAR  dataset using $F_1$  and $recall$ with Hamming radius 2 for varying code lengths.
}
\label{mnist}
\end{figure*}
\section{Minimize the quantization distortion by learned rotations}
\label{Sec:ITQ}
In the above sections, the binary codes are obtained by directly thresholding the learned embeddings at zeros. The simple binarization may cause large quantization loss.
In this section, learned rotations are applied to minimize the quantization errors. That is, normalize the data points such that they are zero-centred in the embedded space and then orthogonally rotate the normalized embeddings for binarization. Orthogonal rotation has been adopted by many methods (\eg \cite{jegou2010aggregating,PCA-ITQ,he2013a,he2013b}) to minimize the quantization error. The simple algorithm in ITQ \cite{PCA-ITQ} is used for the proposed method.

 The impact of the rotations applied on the learned embeddings is evaluated on MNIST and CIFAR. Figure~\ref{mnist}  clearly shows that the orthogonal rotations achieve significantly performance improvements (in terms of both $F_1$ and recall) for IMH-LE and IMH-tSNE.
In conjunctions with the rotations, the proposed IMH-LE and IMH-tSNE methods  perform much better than the PCA based PCA-ITQ. Again this result demonstrates the advantages of the proposed manifold hashing method.

\begin{algorithm}[t]
\caption{\small Supervised \idh (\IDHs)}
{
\footnotesize
\textbf{Input: }
 Unlabelled data $\BX := \{\Bx_1, \Bx_2, \ldots, \Bx_n\}$, labelled  data $\BXs$ of $t$ classes,
 code length $r$, base set size $m$,
 neighborhood size $k$

\textbf{Output: } Binary codes $\BY := \{\By_1, \By_2, \ldots, \By_n\} \in \mathbb{B}^{n \times r}$

1) Generate the base set $\BB := \{\Bc_{1,1}, \cdots, \Bc_{m_1,1}, \cdots, \Bc_{1,t} \cdots, \Bc_{m_t,t}\}$ by  K-means on data points from each class of $\BXs$.

2) Embed $\BB$ into the low dimensional space by \eqref{EQ:tSNE}, \eqref{EQ:Obj_main}  or any other
appropriate manifold leaning method and get $\BY_{\BB}$.

3) Apply supervised subspace learning algorithms such as LDA on $\BY_{\BB}$.

4) Obtain the low dimensional embedding $\BY$ for the whole dataset inductively by Equation  \eqref{EQ:SubInduction}.

5) Threshold $\BY$ at zero.
}
\label{Alg2}
\end{algorithm}

\section{Semantic hashing with supervised manifold learning}
\label{Sec:supervised}

\begin{figure*}[]
\centering

\includegraphics[height = 5cm]{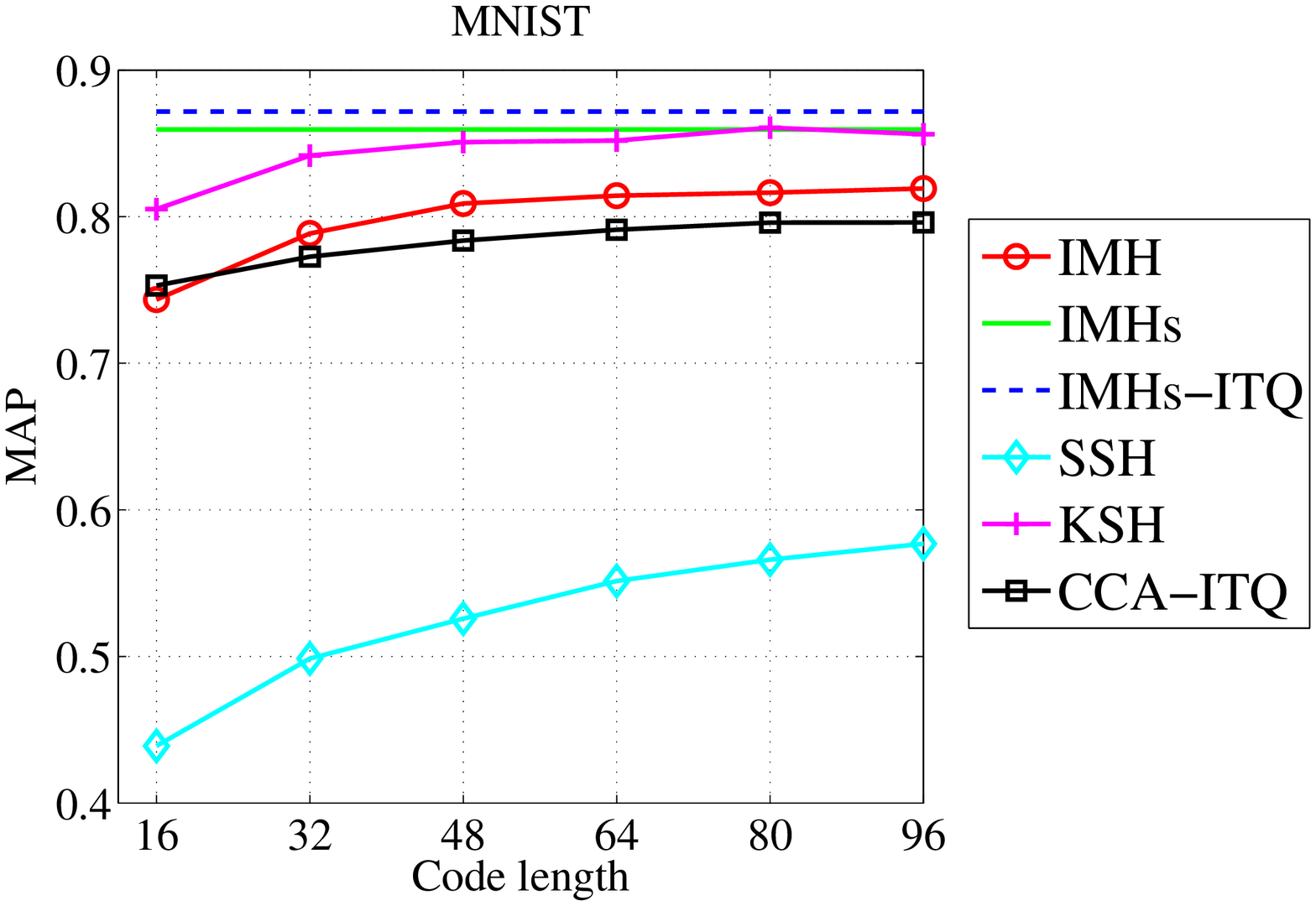}
\includegraphics[height = 5cm]{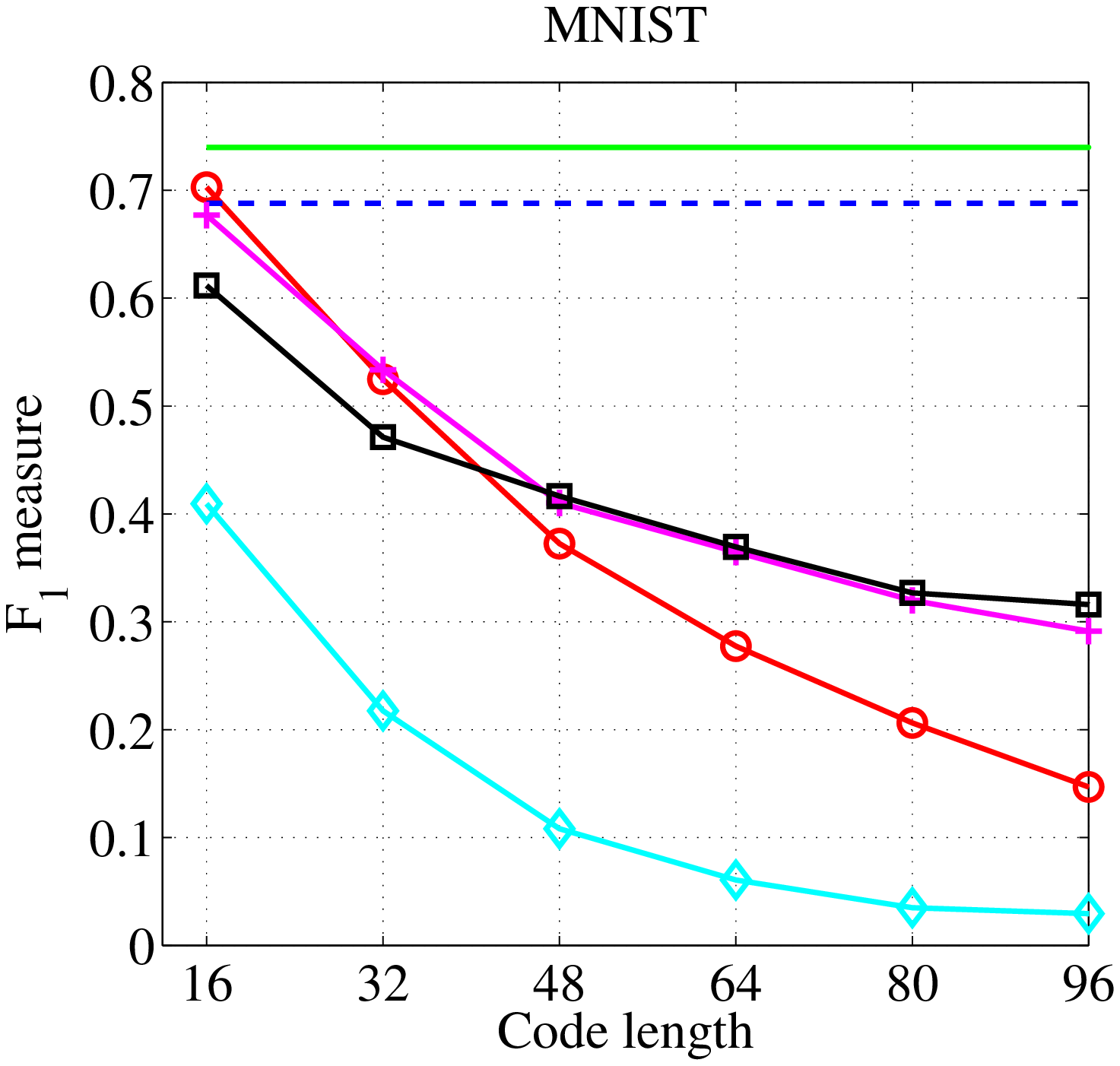}

\includegraphics[height = 5cm]{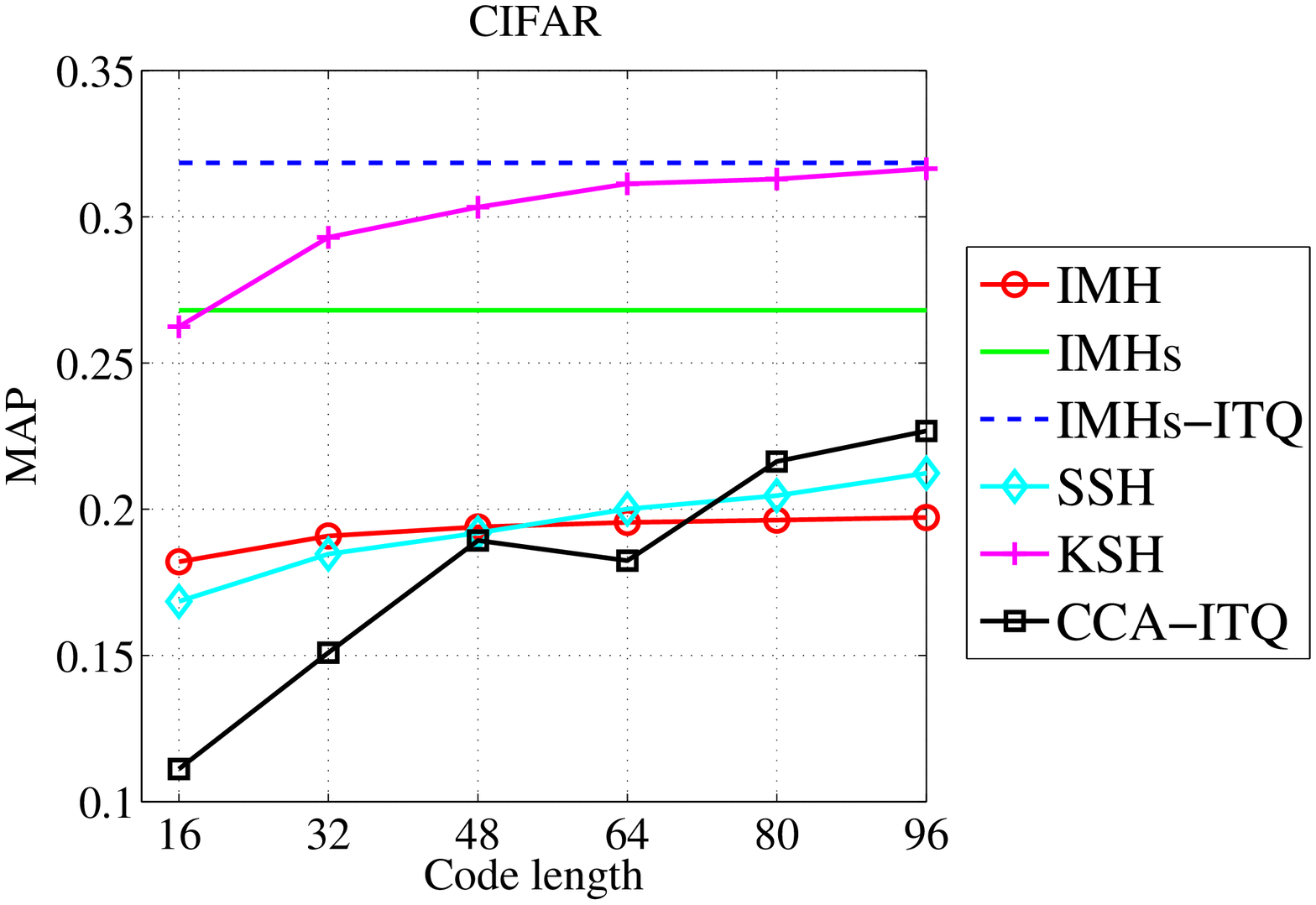}
\includegraphics[height = 5cm]{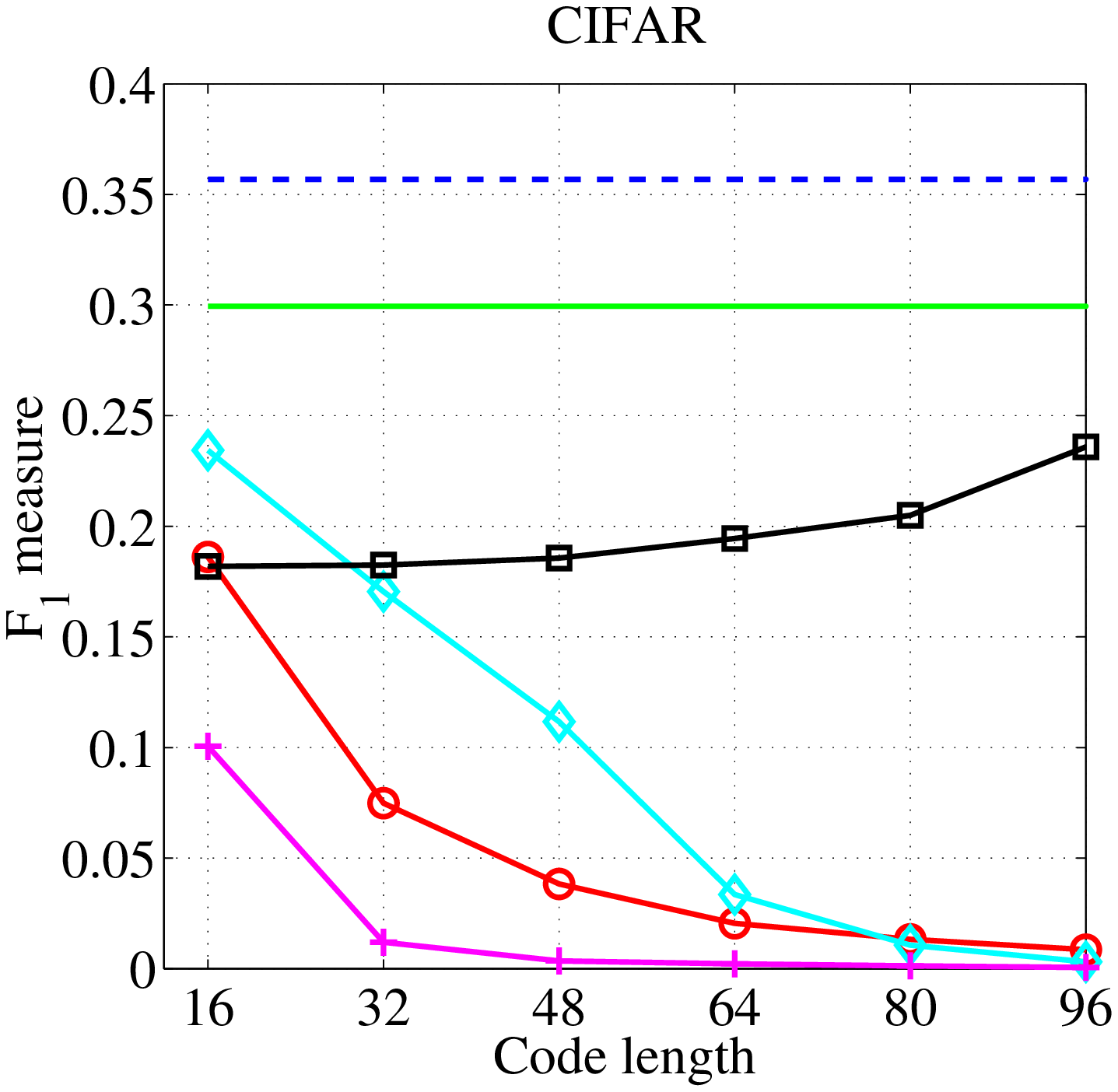}
\caption{Evaluation of IMH with supervised learning  by LDA on the MNIST and CIFAR  datasets. Non-linear embeddings of IMH are obtained by t-SNE.
Since there are only 10 classes with both these two datasets, the reduced dimensionality by LDA (thereby the binary code lenght) is set to 9.}
\label{super}
\end{figure*}

The proposed inductive manifold hashing algorithm has shown to work well on preserving the semantic neighborhood relationships without using label information. It is natural that the performance could be improved by applying supervised learning methods instead of the unsupervised ones to learn the nonlinear embeddings. Unfortunately, most of the manifold learning algorithms are unsupervised.
In order to use the large amount of unsupervised manifold learning methods, a straightforward supervised extension to the proposed IMH algorithm is proposed int this study. First, the base set $\BB := \{\Bc_{1,1}, \cdots, \Bc_{m_1,1}, \cdots, \Bc_{1,t} \cdots, \Bc_{m_t,t}\}$ is generated by applying K-means on data from each of the $t$ classes. After the nonlinear embeddings $\BY_{\BB}$ of $\BB$ are obtained, the supervised subspace learning algorithms are simply conducted on  $\BY_{\BB}$. For a new data point, its binary codes are then obtained by \eqref{EQ:hash_fun}.
The supervised manifold hashing method is summarised in Algorithm \ref{Alg2}.

In this section, the linear discriminant analysis (LDA) is taken as an example to verify the efficacy of the proposed supervised manifold hashing algorithm.
The proposed method is also compared with several recently proposed supervised hashing approaches such as semi-supervised hashing (SSH \cite{SSH2012}) with sequential projection learning, kernel-based supervised hashing (KSH \cite{KSH2012}) and ITQ with Canonical Correlation Analysis (CCA-ITQ \cite{gong2013iterative}).

Experiments are performed on MNIST and CIFAR. In this experiment, 2,000 labelled samples are randomly selected for supervised learning for SSH and KSH, and 1,000 labelled samples are sampled for the base set for IMHs. All labelled training data are used for the linear CCA-ITQ.
Since there are only 10 classes with both these two datasets, the reduced dimensionality by LDA (thereby the binary code length) in the proposed IMHs is fixed to 9. The results are reported in Figure~\ref{super}.
It is clear that the proposed supervised inductive manifold hashing algorithm \IDHs significantly improve the original \IDH  and other compared supervised methods in both MAP and $F_1$ measure.
The ITQ rotations (IMHs-ITQ in the figure) further improves \IDHs with considerable gains, especially on the CIFAR dataset of natural images.
Among other supervised hashing methods, KSH obtains the highest MAPs on MNIST and CIFAR with IMHs or IMHs-ITQ. However it needs  much larger binary code lengths to achieve comparable performance with the proposed methods. In terms of $F_1$, CCA-ITQ is identified to have the best results with long codes.

From these results, it is clear that label information is very useful to achieve semantically effective hashing codes.
And also, the proposed simple supervised hashing framework can effectively leverage the supervised information in the proposed manifold hashing algorithms. Also note that the proposed method does not assume a specific  algorithm like LDA, any other supervised subspace learning or metric learning algorithms may further improve the performance.

\section{Conclusion and discussion}
\label{Sec:conclusion}

The present study has proposed a simple yet effective hashing framework, which provides a practical connection between manifold learning methods (typically non-parametric and with high computational cost) and hash function learning (requiring high efficiency).
By preserving the underlying manifold structure with several non-parametric dimensionality reduction methods,
the proposed
hashing
 methods outperform several state-of-the-art methods
in terms of
both hash lookup and Hamming ranking on several large-scale retrieval-datasets.
The proposed inductive formulation of the hash function sees the suggested methods require only linear time  ($\mathit{O}(n)$) for indexing all of the training data and a constant search time for a novel query.
 Experiments showed that the hash codes can also achieve   promising results on a classification problem even with  very short code lengths.
The proposed inductive manifold hashing method was then extended by applying orthogonal rotations on the learned nonlinear embeddings to minimize the quantization errors, which was shown to achieve significant performance improvements. In addition, this work further extended IMH by adopting supervised subspace learning on the data manifolds, which provided an effective supervised manifold hashing framework.

The proposed hashing methods have been shown to work well on the image retrieval and classification tasks. As an efficient and effective nonlinear feature extraction method, this algorithm can also be applied to other applications, especially the ones need to deal with large dataset. For example, the introduced hashing techniques can be applied to large-scale mobile video retrieval \cite{liu2013near}. Another useful application of the hashing methods would be  compressing the high dimensional features into short binary codes, which could significantly speed up the potential tasks, such as the large scale ImageNet image classification \cite{gong2013learning,CVPR14Lin}.

In this work, the base set size $m$ was set empirically, which was  not an optimal choice apparently. How to automatically set this parameter according to the size and distribution of a specific dataset deserves a future study.

 \section*{Acknowledgements}
 This work was in part supported by ARC Future Fellowship FT120100969.

\bibliographystyle{ieee}
\bibliography{CSRef}

\end{document}